\newtheorem{theorem}{Theorem}
\newtheorem{corollary}{Corollary}
\newtheorem{lemma}{Lemma}
\newtheorem{proposition}{Proposition}
\newtheorem{remark}{Remark}
\numberwithin{equation}{section}
\DeclareMathOperator{\vect}{vec}
\DeclareMathOperator{\size}{size}
\DeclareMathOperator{\sq}{sq}
\DeclareMathOperator{\prd}{prd}
\DeclareMathOperator*{\esssup}{ess\,sup}
\title{Two-Dimensional Deep ReLU CNN Approximation for Korobov Functions: A Constructive Approach$^\dag$\footnotetext{\dag~The work of Lei Shi is supported by National Natural Science Foundation of China [Project No.12571099]. The work of Min Xu is supported by National Nature Science Foundation of China [Project No. 62273070]. The work of Ding-Xuan Zhou is partially supported by the Australian Research Council under project DP240101919. Authors are listed in alphabetical order and contributed equally to this work. Email addresses: fangqin@dlu.edu.cn (Q. Fang), leishi@fudan.edu.cn (L. Shi), wolf\_hsu@dlut.edu.cn (M. Xu), dingxuan.zhou@sydney.edu.au (D.-X. Zhou).}}
\author{Qin Fang}
\affil[1]{Information and Engineering College, Dalian University, Dalian 116622, China}
\author{Lei Shi}
\affil[2]{School of Mathematical Sciences and Shanghai Key Laboratory for Contemporary Applied Mathematics, Fudan University, Shanghai 200433, China}
\author{Min Xu}
\affil[3]{School of Mathematical Sciences, Dalian University of Technology, Dalian 116024, China}
\author{Ding-Xuan Zhou}
\affil[4]{School of Mathematics and Statistics, University of Sydney, Sydney NSW 2006, Australia}
\date{}
\begin{document}
	
\maketitle

\begin{abstract}
This paper investigates approximation capabilities of two-dimensional (2D) deep convolutional neural networks (CNNs), with Korobov functions serving as a benchmark. We focus on 2D CNNs, comprising multi-channel convolutional layers with zero-padding and ReLU activations, followed by a fully connected layer. We propose a fully constructive approach for building 2D CNNs to approximate Korobov functions and provide rigorous analysis of the complexity of the constructed networks. Our results demonstrate that 2D CNNs achieve near-optimal approximation rates under the continuous weight selection model, significantly alleviating the curse of dimensionality. This work provides a solid theoretical foundation for 2D CNNs and illustrates their potential for broader applications in function approximation.
\end{abstract}

{\textbf{Keywords:} Deep learning, 2D convolutional neural networks, Korobov spaces, Approximation analysis}

\section{Introduction}

Deep learning techniques based on deep neural networks have achieved a remarkable success across various domains, including computer vision, natural language processing, and speech recognition \cite{goodfellow2016deep, lecun2015deep}. Among these, convolutional neural networks (CNNs) have become a fundamental model, demonstrating exceptional performances in tasks such as object detection \cite{zhao2019object}, image classification \cite{li2014medical}, and scientific computing \cite{lei2025solving}. From early architectures such as AlexNet \cite{krizhevsky2012imagenet} and VGGNet \cite{simonyan2015very} to more advanced models like MgNet \cite{he2019mgnet} and ResNet \cite{he2016deep}, CNNs have consistently outperformed fully connected neural networks (FNNs) in vision-related tasks \cite{he2016deep, huang2017densely}. Despite their empirical success, mathematical foundations of CNNs remain underdeveloped. Particularly, their approximation capabilities, generalization properties, and optimization dynamics are still open research questions critical to advancing CNN-based methods \cite{han2023deep, kohler2020statistical, zhou2024learning}. This paper focuses on addressing one of these gaps by studying the approximation capabilities of two-dimensional (2D) CNNs, with particular attention to their ability to approximate functions from Korobov spaces.

Research on the theoretical foundations of CNNs, including their approximation properties and inductive biases, remains relatively limited. Wang et al. \cite{wang2023theoretical} provided a theoretical analysis of the inductive biases of CNN architectures, demonstrating how architectural constraints such as local connectivity, weight sharing, and depth influence the function classes representable by CNNs. Their analysis offers valuable insights into the structural advantages of CNNs beyond empirical observations. Zhou \cite{zhou2020universality} made a pivotal contribution by establishing the universality of classical one-dimensional (1D) CNNs. Using a decomposition theorem for large 1D convolutional kernels, Zhou \cite{zhou2020theory} demonstrated that any ReLU FNN can be equivalently expressed as a 1D ReLU CNN. This result extends the approximation theory for FNNs to 1D CNNs, significantly advancing our theoretical understanding of CNNs. Based on these works, Mao and Zhou \cite{mao2022approximation} introduced a constructive framework for analyzing the approximation capabilities of 1D CNNs. Inspired by techniques developed for deep ReLU FNNs \cite{yarotsky2017error}, they constructed a product network through convolutions, which forms the core of their approximation methodology. This product network is then used to construct CNNs that can approximate hierarchical basis functions well, enabling the effective approximation of Korobov functions. However, these studies \cite{mao2023approximating} focus primarily on 1D data, such as audio signals, limiting their applicability to higher-dimensional data like images or videos.

Exploring the approximation capabilities of 2D CNNs presents additional challenges, as it involves capturing interactions across both horizontal and vertical dimensions. He et al. \cite{he2022approximation} addressed this by developing a novel decomposition theorem for 2D convolutional kernels with large spatial sizes and multi-channels. They showed that any shallow ReLU neural network (NN) on the tensor space $[0,1]^{d\times d}$ can be equivalently expressed as a 2D ReLU CNN, consisting of multi-channel convolutional layers with zero-padding, ReLU activation, and a fully connected layer. This result facilitates the extension of approximation theory from shallow NNs to 2D CNNs, providing valuable insights into 2D CNNs.

While the equivalence between FNNs and periodized 2D CNNs has been studied \cite{petersen2020equivalence}, with the result that any FNN can be expressed as a CNN in a specific, non-standard architecture, this work has several limitations. One key issue is the use of periodic padding in the convolution operations—a technique relatively uncommon in modern CNN workflows. Periodic padding creates artificial continuity at data boundaries, which can distort edge features, making it unsuitable for tasks such as image recognition and segmentation, where preserving edge information is critical \cite{ronneberger2015u}. Additionally, the fixed kernel size of $d\times d$ in the architecture, which matches the dimensions of the input matrix, lacks the flexibility needed to capture varying receptive fields—an essential feature of modern CNNs. In contrast, widely used architectures like VGGNet \cite{simonyan2015very} and ResNet \cite{he2016deep} utilize smaller, trainable kernels (e.g., $3\times 3$ or $5\times 5$), which enable more efficient computation and better hierarchical feature extraction. Furthermore, the rigid handling of multi-channel inputs in the architecture does not exploit the hierarchical feature learning capabilities that are central to contemporary CNNs, which progressively capture both low- and high-level features for superior performances in a variety of tasks \cite{karniadakis2021physics,lecun2015deep}. These limitations motivate the need to analyze CNNs more commonly used in practice, such as the one studied in this paper.

The main contributions of this paper are as follows:
\begin{enumerate}
    \item \textnormal{\textbf{Analysis of 2D ReLU CNNs.}}      
    This paper analyzes the approximation capabilities of 2D CNNs consisting of a sequence of multi-channel convolutional layers followed by a fully connected layer. This network architecture utilizes zero-padding, ReLU activations, and smaller, trainable kernels (e.g., $3 \times 3$ or $5 \times 5$), which are widely adopted in modern deep learning frameworks such as PyTorch and TensorFlow.
       
    \item \textnormal{\textbf{Approximation of Korobov functions.}}  This paper investigates the approximation of functions from Korobov spaces, a topic with both theoretical and practical importance. Korobov spaces play a crucial role in high-dimensional approximation, offering a framework to alleviate the curse of dimensionality. They are extensively used in applications such as numerical partial differential equations (PDEs) and high-dimensional function approximation. Through constructive analysis of how 2D CNNs approximate Korobov functions, this work provides insights that can enhance numerical methods in scientific computing.

    \item \textnormal{\textbf{Constructiveness and Optimality.}}
    The proofs of our main approximation results are fully constructive, providing explicit, layer-by-layer specification of 2D CNNs that achieve the stated approximation rates. Unlike the approach in \cite{mao2022approximation}, which proves the existence of suitable approximations via a decomposition theorem, we provide the explicit construction itself in Theorem \ref{thm:approrate} and Corollary \ref{cor:apprate}. Moreover, we show that this construction is near-optimal under a continuous weight model, with the network size scaling nearly optimally with respect to the approximation error.
\end{enumerate}

While our constructive approach is inspired by the 1D framework of \cite{mao2022approximation}, extending it to two dimensions introduces fundamental spatial–combinatorial challenges that constitute the core contribution of this work. In the 1D case, the key operation underlying the construction of hierarchical basis functions—computing the global product of all elements in a vector—can be realized via sequential shifts along a single axis, implemented through simple convolutional filters. In the 2D setting, however, the analogous operation requires computing the global product of all entries in a 2D tensor. This necessitates coordinating shifts across a plane rather than along a line, involving interactions in multiple directions (horizontal, vertical, and diagonal). Importantly, this change leads to a qualitatively different convolutional algebra, in which product-type interactions can no longer be organized through a linear ordering of components. As a result, the 2D construction must explicitly encode multi-directional spatial coordination using only small kernel convolutions, zero-padding, and ReLU activations—the standard building blocks of practical CNN architectures. Propositions \ref{pro:app4prod4tensor} and \ref{pro:app4prod4basis} address this challenge by introducing basic kernel blocks $S^{s,t}$
and a corresponding network composition, thereby establishing a fully constructive approximation theory tailored specifically to 2D CNNs.

We emphasize that the main approximation results in this paper are established with respect to the $L^p$ norm. Extending such results to stronger norms that involve derivatives, such as Sobolev or Korobov-type norms, would require accurate control of derivatives of the CNN approximants. Particularly, this entails stable neural network realizations of product-type constructions in Sobolev-type norms, which are technically more challenging. While recent works have established approximation results for FNNs in Sobolev norms using more involved constructions and estimates \cite{yang2024near, zhou2025expressive}, developing analogous results for 2D CNNs remains an open and nontrivial problem. We thus restrict our attention to $L^p$-approximation and leave extensions to stronger norms for future research.

This paper is structured as follows. In Section \ref{sec:prelim}, we provide the necessary preliminaries, including essential notations, an overview of 2D deep ReLU CNNs, and an introduction to Korobov spaces and sparse grids. Section \ref{sec:main} presents our main results, including a theorem and its corollary, which address the approximation rate and network complexity for 2D deep ReLU CNNs in the context of Korobov spaces. Section \ref{sec:proof4thm1} contains the proof of the main results. We present two propositions: one concerning the product of all elements in a tensor and the other related to the approximation of hierarchical basis functions, followed by a detailed proof of our main theorem. In Section \ref{sec:conclu}, we conclude our findings and directions for future research. Finally, the appendix includes basic CNN constructions, detailed proofs of the propositions, and two technical lemmas.

\section{Preliminaries}\label{sec:prelim}
\subsection{Notations}
Let $\mathbb{R}$ represent the set of real numbers, $\mathbb{Z}$ stand for the set of integers, $\mathbb{Z}_+$ denote the set of non-negative integers, and $\mathbb{N}$ signify the set of positive integers. For $c^\prime,c,d^\prime,d\in\mathbb{N}$, the notation $\mathbb{R}^{c^\prime\times c\times d^\prime\times d}$ denotes the set of four-dimensional tensors with real-numbered elements. In this notation, the dimensions along its four axes are $c^\prime$, $c$, $d^\prime$ and $d$. Furthermore, the notation $\mathbb{R}^{c^\prime\times c\times \mathbb{N}\times \mathbb{N}}$ represents the set of four-dimensional tensors, where the first and second dimensions are fixed with sizes $c^\prime$ and $c$, respectively, meanwhile, the third and fourth dimensions are permitted to vary over the integer set $\mathbb{N}$. For $a\in\mathbb{R}$, we use $\bm{a}_{c^\prime\times c\times d^\prime\times d}$ to denote the tensor in $\mathbb{R}^{c^\prime\times c\times d^\prime\times d}$ with all elements equal to $a$. Similar notations apply to $\bm{a}_{d^\prime\times d}$ and $\bm{a}_{c^\prime}$. Let $\lfloor\cdot\rfloor$ denote the floor function, which rounds down to the nearest integer, and $\lceil\cdot\rceil$ denote the ceiling function, which rounds up to the nearest integer. We use $\mathcal{O}$ to indicate an upper bound on the asymptotic growth of a function.

\subsection{2D ReLU CNNs}\label{subsec:2drelucnn}
Let us introduce some fundamental mathematical concepts used in 2D deep ReLU CNNs.

\noindent\textbf{Data tensor.} A data tensor, denoted as $X$, has $c$ channels and spatial dimensions $d\times d$. It is represented as $X\in \mathbb{R}^{c\times d\times d}$, with individual elements $[X]_{q,m,n}$ indexed by $q\in 1:c$ and $m,n\in 1:d$, where the notation $s:t$ signifies the set $\{s,s+1,\ldots,t\}$. Let $X_q=[X]_{q,:,:}\in\mathbb{R}^{d\times d}$ denote the matrix corresponds to the $q$-th channel of $X$. Then the entire data tensor $X$ can be formally expressed as
\begin{equation*}
X=\begin{pmatrix}
X_1\\
\vdots\\
X_c
\end{pmatrix}.
\end{equation*}

\noindent\textbf{Zero padding.} The zero padding operation on the tensor space $\mathbb{R}^{c\times d\times d}$ is expressed as a mapping $\iota:\mathbb{R}^{c\times d\times d}\rightarrow\mathbb{R}^{c\times \mathbb{Z}\times \mathbb{Z}}$ defined by
\begin{equation*}
	[\iota(X)]_{q, m, n}:=
	\begin{cases}
		[X]_{q, m, n}, & \text{if}~m,n\in 1:d, \\
		0, & \text{otherwise},
	\end{cases}
\end{equation*}
for all channels $q\in 1:c$. According to this definition, if the spatial coordinates $m,n\in\mathbb{Z}$ fall within $1:d$, the corresponding element remains unchanged. However, the element is padded with zero if either $m$ or $n$ extends beyond the range, indicating a need for additional spatial context.

\noindent\textbf{Convolution kernel tensor.} The convolution kernel, denoted as $K$, is characterized by $c$ input channels and $c^\prime$ output channels, and possesses a spatial size of $2k+1$. Represented as
\begin{equation*}
	K\in \mathbb{R}^{c^\prime\times c\times (2k+1)\times(2k+1)},
\end{equation*}
its individual elements $[K]_{p,q,s,t}$ are then indexed by $p\in 1:c^\prime$, $q\in 1:c$, and $s,t\in -k:k$. Let $K_{p,q}=[K]_{p,q,:,:}\in\mathbb{R}^{(2k+1)\times (2k+1)}$ denote the matrix corresponding to the $p$-th output channel and the $q$-th input channel of $K$. Then the kernel tensor $K$ can be formally expressed as
\begin{equation*}
K=\begin{pmatrix}
K_{1,1}        & \cdots &K_{1,c}\\
\vdots         & \ddots &\vdots\\
K_{c^\prime,1} & \cdots &K_{c^\prime,c}
\end{pmatrix}.
\end{equation*}

\noindent\textbf{Zero-padding convolution.} The multi-channel convolution with a kernel $K\in \mathbb{R}^{c^\prime\times c\times (2k+1)\times (2k+1)}$ is expressed as a mapping $A_K:\mathbb{R}^{c\times d \times d}\rightarrow\mathbb{R}^{c^\prime\times d \times d},~~X\mapsto K\ast X$, where $K\ast X$ is given by the following equations
\begin{equation*}
	[K \ast X]_{p,m,n}=\sum_{q=1}^{c} \sum_{s,t=-k}^{k} [K]_{p,q,s,t} [\iota(X)]_{q,m + s, n+t},
\end{equation*}
for all $p\in 1:c^\prime$ and $m,n\in 1:d$. This equation incorporates zero-padding to address cases where $m+s$ or $n+t$ exceed the range $1:d$. Employing the established notations for $K$ and $X$, the convolution $K\ast X$ can be alternatively expressed as
\begin{equation*}
	[K\ast X]_p=\sum_{q=1}^{c} K_{p,q}\ast X_q,~~~\text{for}~p\in 1:c^\prime,
\end{equation*}
where $K_{p,q} \ast X_q\in \mathbb{R}^{d \times d}$ denotes the single-channel convolution, i.e.,
\begin{equation*}
	[K_{p,q} \ast X_q]_{m,n}=\sum_{s,t=-k}^{k} [K_{p,q}]_{s,t} [\iota(X_q)]_{m+s,n+t},~~~\text{for}~m,n\in 1:d.
\end{equation*}
It is important to note that the convolution operation defined above does not satisfy the commutative or associative laws. The default interpretation of $K^2\ast K^1\ast X$ is given by $K^2\ast (K^1\ast X)$.

\noindent\textbf{ReLU activation function.} Let $\sigma: \mathbb{R}\rightarrow\mathbb{R}$ denote the Rectified Linear Unit (ReLU) activation function, defined as follows:
\begin{equation*}
	\sigma(x):=
	\begin{cases}
		x, & \text{if}~x\geq 0, \\
		0, & \text{otherwise}.
	\end{cases}
\end{equation*}
With a slight abuse of notation, we use $\sigma(X)$ to denote the application of the function $\sigma$ to each component of $X\in\mathbb{R}^{c\times d \times d}$ separately. This operation is formally expressed as
\begin{equation*}
	[\sigma(X)]_{q,m,n}=\sigma([X]_{q,m,n}),
\end{equation*}
for all $q\in 1:c$ and $m,n\in 1:d$. 

\noindent\textbf{2D deep ReLU CNNs.} An $L$-layered ReLU CNN, with channel size vector $\bm{c}=(c_0,\ldots,c_L)\in\mathbb{N}^{L+1}$ and kernel spatial size vector $\bm{s}=(2k_1+1,\ldots,2k_L+1)\in\mathbb{N}^L$, is a mapping $\bm{h}^L:\mathbb{R}^{c_0\times d\times d}\rightarrow\mathbb{R}^{c_L\times d\times d}$, defined iteratively as
\begin{equation*}
\bm{h}^{l}(X):=\sigma(K^l\ast \bm{h}^{l-1}(X)+\bm{b}^l\bm{1}_{d\times d}),~~~\text{for}~l\in 1:L,
\end{equation*}
$\bm{h}^0(X):=X\in\mathbb{R}^{c_0\times d\times d}$ is the input tensor, $K^l\in\mathbb{R}^{c_{l} \times c_{l-1}\times (2k_l+1) \times (2k_l+1)}$ are convolution kernels, $\bm{b}^l\in\mathbb{R}^{c_l}$ are biases, and $\bm{1}_{d\times d}\in \mathbb{R}^{d\times d}$ is the matrix with all elements equal to $1$. The term $\bm{b}^l\bm{1}_{d\times d}$ is defined as the following tensor
\begin{equation*}
    \bm{b}^l\bm{1}_{d\times d}:=\begin{pmatrix}
    [\bm{b}^l]_1\bm{1}_{d\times d}\\
    \vdots\\
    [\bm{b}^l]_{c_l}\bm{1}_{d\times d}
    \end{pmatrix}\in \mathbb{R}^{c_l\times d\times d}.
\end{equation*}
Let $A_{K,\bm{b}}$ denote the mapping 
\begin{equation*}
	\mathbb{R}^{c\times d\times d}\rightarrow\mathbb{R}^{c^\prime\times d\times d},~~~Y\mapsto K\ast Y + \bm{b}\bm{1}_{d\times d},
\end{equation*}
where $K\in\mathbb{R}^{c^\prime\times c\times (2k+1)\times (2k+1)}$ and $\bm{b}\in\mathbb{R}^{c^\prime}$. Then $\bm{h}^L$ can be expressed as the following compositions
\begin{equation*}
    \bm{h}^L=\sigma\circ A_{K^L,\bm{b}^L}\circ\cdots\circ\sigma\circ A_{K^1,\bm{b}^1}.
\end{equation*}

The set of mappings $\bm{h}^L$ generated by the CNN architecture, specified by $L$, $\bm{c}$, and $\bm{s}$, and considering all possible convolution kernels $K^l$ and biases $\bm{b}^l$, is denoted as
\begin{equation*}
\mathcal{C}^{\bm{c},L}_{\bm{s}}(\mathbb{R}^{c_0\times d\times d},\mathbb{R}^{c_L\times d\times d}).
\end{equation*}
The size of $\bm{h}^L$, denoted by $\size(\bm{h}^L)$, is defined as the total number of possibly nonzero elements in the kernels $K^l$ and biases $\bm{b}^l$. Let $\vect(\bm{h}^L(X))\in \mathbb{R}^{c_Ld^2}$ denote the vectorization of  $\bm{h}^L(X)$, which is defined as
\begin{equation*}\label{key}
[\vect(\bm{h}^L(X))]_{(q-1)d^2+(m-1)d+n}=[\bm{h}^L(X)]_{q,m,n},
\end{equation*} 
for $q\in 1:c_L$ and $m,n\in 1:d$. The hypothesis space $\mathcal{H}^{\bm{c},L}_{\bm{s}}(\mathbb{R}^{c_0\times d\times d})$ for the network architecture is the span of the constant $1$ function and the functions $[\vect(\bm{h}^L(X))]_i$ for all $\bm{h}^L\in \mathcal{C}^{\bm{c},L}_{\bm{s}}(\mathbb{R}^{c_0\times d\times d},\mathbb{R}^{c_L\times d\times d})$ and $i\in 1:c_Ld^2$, i.e.,
\begin{equation*}
\begin{aligned}
\mathcal{H}^{\bm{c},L}_{\bm{s}}(\mathbb{R}^{c_0\times d\times d}) 
:= \bigg\{ \beta + \sum_{i=1}^{c_Ld^2} \alpha_i 
[\vect(\bm{h}^L(X))]_i : 
& \ \beta \in \mathbb{R}, \ \bm{\alpha} \in \mathbb{R}^{c_Ld^2}, \\
& \ \bm{h}^L \in \mathcal{C}^{\bm{c},L}_{\bm{s}} 
(\mathbb{R}^{c_0\times d\times d},\mathbb{R}^{c_L\times d\times d}) 
\bigg\}.
\end{aligned}
\end{equation*}
The size of $h\in\mathcal{H}^{\bm{c},L}_{\bm{s}}(\mathbb{R}^{c_0\times d\times d})$, denoted by $\size(h)$, is defined as the total number of possibly nonzero elements in the corresponding kernels $K^l$, biases $\bm{b}^l$, and coefficients $\beta,\bm{\alpha}$.

The expression ``a CNN (architecture) with width $W$, depth $L$, and kernel spatial size $2k+1$'' means that: (a) the maximum channel size in hidden layers of the network (architecture) is $W$, i.e., $W=\max\{c_1,\ldots,c_L\}$; (b) the network (architecture) consists of $L$ layers; and (c) the spatial size of the kernels in each layer is consistently $2k+1$. We use the notation $\mathcal{C}^{W,L}_{2k+1}(\mathbb{R}^{c_0\times d\times d},\mathbb{R}^{c_L \times d\times d})$ to denote the set of mappings $\bm{h}^L$ produced by the CNN architecture with width $W$, depth $L$, and kernel spatial size $2k+1$, and $\mathcal{H}^{W,L}_{2k+1}(\mathbb{R}^{c_0\times d\times d})$ to represent the corresponding hypothesis space.

\subsection{Korobov Spaces}\label{subsec:korov}
Let $\Omega=[0,1]^D$ for some $D\in\mathbb{N}$, and let $1\leq p\leq\infty$. The Lebesgue space $L^p(\Omega)$ consists of measurable functions $f$ on $\Omega$ such that the norm
\begin{equation*}
\|f\|_{L^p(\Omega)}:=
\begin{cases}
\Big({\displaystyle\int_{\Omega}}|f(\bm{x})|^p {\rm d}\bm{x}\Big)^{\frac{1}{p}}, & ~~\text{if}~1\leq p<\infty, \\
\esssup\limits_{\bm{x}\in \Omega} |f(\bm{x})|, & ~~\text{if}~p=\infty
\end{cases}
\end{equation*}
is finite. For $r\in\mathbb{N}$, the Korobov space $X^{r,p}(\Omega)$ is defined as the space of functions $f\in L^p(\Omega)$ that vanish on the boundary of $\Omega$ and whose weak mixed partial derivatives up to order $r$ belong to $L^p(\Omega)$
\begin{equation*}
X^{r,p}(\Omega):=\big\{f\in L^p(\Omega):f|_{\partial\Omega}=0,\partial^{\bm{\alpha}}f\in L^p(\Omega)~\text{for}~|\bm{\alpha}|_{\infty}\leq r\big\}.
\end{equation*}
The norm on $X^{r,p}(\Omega)$ is defined as
\begin{equation*}
\|f\|_{X^{r,p}(\Omega)}:=
\begin{cases}
\bigg(\sum\limits_{|\bm{\alpha}|_{\infty}\leq r}\|\partial^{\bm{\alpha}}f\|_{L^p(\Omega)}^p\bigg)^{\frac{1}{p}}, & ~~\text{if}~1\leq p<\infty, \\
\max\limits_{|\bm{\alpha}|_{\infty}\leq r}\|\partial^{\bm{\alpha}}f\|_{L^{\infty}(\Omega)}, &~~\text{if}~p=\infty.
\end{cases}
\end{equation*}

Korobov spaces are fundamental for high-dimensional approximation, providing a framework to alleviate the curse of dimensionality \cite{novak2008tractability}. They are widely used in areas such as numerical PDEs \cite{bungartz2004sparse} and high-dimensional function approximation \cite{dung2018hyperbolic}. In this paper, we leverage sparse grid methods for Korobov spaces $X^{2,p}(\Omega)$, which are essential for the construction of our deep neural networks. For a comprehensive overview of sparse grids and their applications, we refer the reader to \cite{bungartz2004sparse}. 

The fundamental component of sparse grids is a basis of high-dimensional functions, which is constructed by multiplying 1D hat functions. Specifically, consider the 1D hat function $\phi:\mathbb{R}\rightarrow\mathbb{R}$ defined by
\begin{equation*}
\phi(x):= 
\begin{cases}
1-|x|, & \text {if}~x\in[-1,1], \\ 
0,     & \text {otherwise}.
\end{cases}
\end{equation*}
For any level $l\in\mathbb{N}$, define the grid size as $h_l:=2^{-l}$, and the corresponding grid points on the interval $[0,1]$ as $x_{l,i}:=ih_l$, where $i\in\mathbb{N}$ and $1\leq i\leq 2^l-1$. Using these grid points, we define a family of 1D hat functions $\phi_{l, i}:\mathbb{R}\rightarrow\mathbb{R}$ by
\begin{equation*}
\phi_{l, i}(x):=\phi\left(\frac{x-x_{l,i}}{h_l}\right),\quad\text{for}~x\in\mathbb{R}.
\end{equation*}

We construct a basis for the space $X^{2,p}(\Omega)$. To illustrate, for any $\bm{l}\in\mathbb{N}^{D}$ and $\bm{i}\in\mathbb{N}^{D}$ with $\bm{1}_{D}\leq\bm{i}\leq 2^{\bm{l}}-\bm{1}_{D}$ (where the exponential and inequalities are understood component-wise), consider the function $\phi_{\bm{l},\bm{i}}$ defined by the product of the 1D hat functions,
\begin{equation*}
\phi_{\bm{l},\bm{i}}(\bm{x}):=\prod_{j=1}^D \phi_{l_j, i_j}\left(x_j\right),\quad \bm{x}=(x_1,\ldots,x_D)^T\in \mathbb{R}^{D}.
\end{equation*}
According to \cite[Lemma 3.1]{bungartz2004sparse}, the function $\phi_{\bm{l},\bm{i}}$ satisfies
\begin{equation}\label{eq:normofbasis}
\|\phi_{\bm{l},\bm{i}}\|_{L^p(\Omega)}= 
\begin{cases}
\left(\frac{2}{p+1}\right)^{\frac{D}{p}}\cdot2^{-\frac{\|\bm{l}\|_1}{p}}, &~~\text{if}~1\leq p<\infty, \\ 
1,     &~~\text{if}~p=\infty.
\end{cases}
\end{equation}
Moreover, it has been established \cite{bungartz2004sparse} that any function $f\in X^{2,p}(\Omega)$, where $2\leq p\leq\infty$, admits a unique expansion in the hierarchical basis $\{\phi_{\bm{l},\bm{i}}(\bm{x}):\bm{i}\in I_{\bm{l}},\bm{l}\in\mathbb{N}^D\}$,
\begin{equation}\label{eq:rep4kov}
f(\bm{x})=\sum_{\bm{l}\in \mathbb{N}^D}\sum_{\bm{i}\in I_{\bm{l}}}v_{\bm{l},\bm{i}}\phi_{\bm{l},\bm{i}}(\bm{x}),
\end{equation}
where $I_{\bm{l}}$ denotes the index set 
\begin{equation*}
I_{\bm{l}}:=\{\bm{i}\in\mathbb{N}^{D}:\bm{1}_{D}\leq \bm{i}\leq 2^{\bm{l}}-\bm{1}_{D},~i_j~\text{is odd for}~1\leq j\leq D\}.
\end{equation*}
The coefficients $v_{\bm{l},\bm{i}}\in\mathbb{R}$ are defined by
\begin{equation*}
v_{\bm{l},\bm{i}}=-2^{-|\bm{l}|_1-D}\int_{\Omega}\phi_{\bm{l},\bm{i}}(\bm{x})\,\partial^{\bm{2}}f(\bm{x})\,d\bm{x}
\end{equation*} 
and satisfy the estimate
\begin{equation}\label{eq:bou4cof}
|v_{\bm{l},\bm{i}}| \leq  2^{-|\bm{l}|_1-D}\|\phi_{\bm{l},\bm{i}}\|_{L^q(\Omega)}\,\|f|_{\operatorname{supp}(\phi_{\bm{l},\bm{i}})}\|_{X^{2,p}(\Omega)},
\end{equation}
where $q$ denotes the conjugate exponent of $p$, and $\operatorname{supp}(\phi_{\bm{l},\bm{i}})$ denotes the support of $\phi_{\bm{l},\bm{i}}$. Since the sum in (\ref{eq:rep4kov}) is infinite, an important challenge is determining how to truncate it to achieve an approximation of $f$. For any $n\in\mathbb{N}$, sparse grids provide the following truncated approximation of $f$,
\begin{equation}\label{eq:sparseapp}
f_n^{(1)}(\bm{x}):=\sum_{|\bm{l}|_1\leq n+D-1}\sum_{\bm{i}\in I_{\bm{l}}}v_{\bm{l},\bm{i}}\phi_{\bm{l},\bm{i}}(\bm{x}),
\end{equation}
for which the approximation error satisfies
\begin{equation}\label{eq:sparseappnonconst}
\big\|f-f_n^{(1)}\big\|_{L^p(\Omega)}=O(2^{-2n}n^{D-1}),\qquad 2\le p\le\infty.
\end{equation}
The detailed proof of this estimate, together with explicit constants, will be provided in the proof of Theorem \ref{thm:approrate}.

\section{Main Results}\label{sec:main}

In this work, we study the approximation of functions in the space $X^{r,p}(\Omega)$ and focus on the theoretically critical case $r=2$. This choice is dictated by the use of piecewise linear sparse grid approximations, which already attain the optimal convergence order for $r=2$ and form the basis of our fully constructive CNN realization.

\begin{theorem}\label{thm:approrate}
	Let $k,d\in\mathbb{N}$ with $d\geq 3$, and let $\Omega=[0,1]^{d\times d}$. Suppose that $f\in X^{2,p}(\Omega)$ with $2\leq p\leq \infty$ satisfies $\|f\|_{X^{2,p}(\Omega)}\leq 1$. Then, for sufficiently large $N\in\mathbb{N}$ (as detailed in the proof), there exists a CNN $h\in\mathcal{H}^{W,L}_{2k+1}(\mathbb{R}^{d\times d})$ with width $W=2Nd^2$ and depth $L=2(2\lceil\log_2N\rceil+3)\lceil\log_2d\rceil+6d$ such that 
	\begin{equation}\label{eq:mainerror}
	\big\|f-h\big\|_{L^p(\Omega)}\leq  \frac{4}{2^{d^2}}\frac{\big(\log_2N\big)^{3(d^2-1)}}{N^2}. 
	\end{equation}
	Moreover, the size of $h$ satisfies
	\begin{equation}\label{eq:mainnumpar}
		\size(h)
		\leq 24(2k+1)^2d^5N\log_2N.
	\end{equation}
\end{theorem}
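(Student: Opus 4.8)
The plan is to combine the sparse-grid representation of Korobov functions from Section~\ref{subsec:korov} with an explicit convolutional realization of the hierarchical basis. Writing $D=d^2$, I would first fix the truncation level $n$ (equivalently, the degrees-of-freedom parameter $N$) and split the error through the triangle inequality as
\begin{equation*}
\big\|f-h\big\|_{L^p(\Omega)}\le \big\|f-f_n^{(1)}\big\|_{L^p(\Omega)}+\big\|f_n^{(1)}-h\big\|_{L^p(\Omega)},
\end{equation*}
so that the task decouples into a pure approximation-theoretic tail estimate and a network-construction estimate. For the first term I would not invoke the stated $L^\infty$ rate directly, but instead estimate the discarded tail $\sum_{|\bm{l}|_1>n+D-1}\sum_{\bm{i}\in I_{\bm{l}}}v_{\bm{l},\bm{i}}\phi_{\bm{l},\bm{i}}$ in $L^p$, exploiting that for a fixed level $\bm{l}$ the supports of $\{\phi_{\bm{l},\bm{i}}\}_{\bm{i}\in I_{\bm{l}}}$ are essentially disjoint; combining this with the coefficient bound \eqref{eq:bou4cof} and the basis-norm identity \eqref{eq:normofbasis} yields a geometric sum over levels whose rate reproduces the $(\log_2 N)^{(3-1/p)(D-1)}N^{-(2-1/p)}$ behaviour appearing in \eqref{eq:mainerror}.

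The second term is handled by constructing $h$ explicitly, and here I would first establish two building blocks (the two propositions announced in Section~\ref{sec:proof4thm1}). The first is an exact ReLU realization of every one-dimensional hat function, using $\phi(x)=\sigma(x+1)-2\sigma(x)+\sigma(x-1)$ together with affine rescaling to obtain $\phi_{l_j,i_j}$; each such function reads off a single spatial entry of the input matrix $X\in\mathbb{R}^{1\times d\times d}$ and can be produced in parallel across channels by a single convolutional layer. The second is a convolutional \emph{product network} that, given a tensor whose $d\times d$ spatial entries are the values $\{\phi_{l_j,i_j}(x_j)\}_{j=1}^{D}$, approximates their product $\prod_{j=1}^D\phi_{l_j,i_j}(x_j)=\phi_{\bm{l},\bm{i}}(\bm{x})$. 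The product network rests on the Yarotsky-type approximation of the square map $\sq$ by ReLU networks together with the polarization identity $ab=\tfrac14\big((a+b)^2-(a-b)^2\big)$, arranged as a binary reduction tree; realizing this tree over a two-dimensional array with kernels of spatial size $2k+1$ and zero-padding forces a staged spatial aggregation (contracting first along one axis, then the other), which is the source of the $\lceil\log_2 d\rceil$ and linear-in-$d$ contributions to the depth $L$.

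Running these computations simultaneously for all active basis functions in separate channel groups produces a width of order $Nd^2$, matching $W=2Nd^2$, and a final fully connected layer takes the linear combination $\beta+\sum_{\bm{l},\bm{i}} v_{\bm{l},\bm{i}}[\vect(\cdot)]_i$ with the coefficients from \eqref{eq:bou4cof}, placing $h$ in $\mathcal{H}^{W,L}_{2k+1}(\mathbb{R}^{d\times d})$. With the two building blocks in place, I would bound $\|f_n^{(1)}-h\|_{L^p(\Omega)}\le\sum_{\bm{l},\bm{i}}|v_{\bm{l},\bm{i}}|\,\big\|\phi_{\bm{l},\bm{i}}-(\text{network output})\big\|_{L^p(\Omega)}$, choose the accuracy of the $\sq$-approximation (which controls the per-product error and enters the depth through $\lceil\log_2 N\rceil$) so that this term is dominated by the truncation term, and then tune $n$/$N$ to balance the two and obtain \eqref{eq:mainerror}. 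The size bound \eqref{eq:mainnumpar} follows from a routine count of the nonzero weights in the convolutional kernels, biases, and final coefficients across the $L$ layers, inserting $W=2Nd^2$ and $L=\mathcal{O}\big(d+\log_2 N\,\log_2 d\big)$ and collecting the polynomial-in-$d$ factors.

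I expect the main obstacle to be the convolutional realization of the product over all $D=d^2$ spatial entries: the factors $\phi_{l_j,i_j}(x_j)$ are spread across the entire $d\times d$ grid, yet each $(2k+1)\times(2k+1)$ kernel can mix only entries within a bounded neighbourhood, so aggregating them into a single product requires a carefully staged sequence of shift-and-multiply convolutions whose depth, width, and accumulated approximation error — and the interaction of that accumulation with the $L^p$ rather than $L^\infty$ norm — must all be tracked at once. Keeping these quantities within the stated budgets, while processing every active basis function in parallel and respecting the zero-padding, small-kernel, multi-channel constraints of the architecture in Section~\ref{subsec:2drelucnn}, is the technical heart of the argument.
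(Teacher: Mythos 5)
Your proposal follows essentially the same route as the paper: the triangle-inequality split through the sparse-grid truncation $f_n^{(1)}$, an exact ReLU realization of the $1$D hat functions placed in parallel channels, a Yarotsky-squaring-based product network arranged as a binary reduction tree that contracts the $d\times d$ grid one axis at a time via shift kernels, a final linear combination with the coefficients $v_{\bm{l},\bm{i}}$, and the same tail/level-sum estimates and $n$-versus-$N$ tuning. The only substantive deviation is your polarization identity $ab=\tfrac14\big((a+b)^2-(a-b)^2\big)$: since $a-b$ may be negative while the squaring network is built on $[0,1]$, the paper instead uses $ab=2\big((\tfrac{a+b}{2})^2-(\tfrac{a}{2})^2-(\tfrac{b}{2})^2\big)$ so all arguments stay in $[0,1]$; your variant is fixable (e.g.\ via $|a-b|=\sigma(a-b)+\sigma(b-a)$) but would need that extra step.
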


\begin{remark}[Saturation with respect to smoothness]
The focus on $r=2$ reflects the use of piecewise linear sparse grid approximations and the associated saturation phenomenon. For Korobov functions, once the mixed smoothness exceeds $r=2$, the convergence order of the sparse grid truncation $f_n^{(1)}$ based on piecewise linear hierarchical basis functions no longer improves \cite{bungartz2004sparse}. This limitation is intrinsic to the approximation scheme and not to the CNN realization itself. Exploiting higher smoothness would require sparse grids built from higher-order piecewise polynomial bases, which in turn would necessitate substantially more involved and technically demanding CNN constructions.
\end{remark}

\begin{remark}[Influence of the kernel size]
The approximation rate established in Theorem \ref{thm:approrate} holds for all kernel sizes $2k+1$ and is independent of $k$. Particularly, the minimal kernel size ($k=1$) already suffices to achieve the stated convergence rate for functions from the Korobov space $X^{2,p}(\Omega)$. This observation is consistent with existing results: To the best of our knowledge, there is currently no theoretical guarantee that links the kernel size to the convergence rate of CNN approximation. Nevertheless, the kernel size has a significant impact on network complexity. A more delicate modification of the construction shows that larger kernels enable information to be aggregated over a broader spatial neighborhood within a single convolutional layer, which in turn reduces the required network depth. More specifically, when $(2k+1)\times(2k+1)$ kernels are employed, the resulting network depth satisfies $L=\mathcal{O}\big(\log_2N\,\log_2d+d/k\big)$, while the network size is bounded as $\size(h)=\mathcal{O}\big(k\,d^5N\log_2N\big)$. This result makes explicit the trade-off between kernel size and network depth, while leaving the approximation rate unchanged.
\end{remark}

To guarantee an accuracy $\epsilon>0$, it suffices to choose $N$ such that
\begin{equation*}
\frac{4}{2^{d^2}}\frac{\big(\log_2N\big)^{3(d^2-1)}}{N^2}\leq\epsilon.
\end{equation*}
One convenient choice is
\begin{equation*}
N=\Big\lceil (9\beta\log_2\beta)^{\beta}\gamma^{-1}\epsilon^{-\frac{1}{2}}|\log_2\epsilon|^{\beta}\Big\rceil,
\end{equation*}
where $\beta=3(d^2-1)/2$ and  $\gamma=2^{d^2/2-1}$. Indeed, by Lemma \ref{lem:boundcompl} in the appendix, for sufficiently small $\epsilon>0$, the inequality
\begin{equation*}
\frac{\log_2^{\beta}N}{N}\leq \gamma \epsilon^{\frac{1}{2}}
\end{equation*}
holds, which ensures the desired error bound. As a consequence, we get the following corollary.

\begin{corollary}\label{cor:apprate}
	Let $k,d\in\mathbb{N}$ with $d\geq 3$, and let $\Omega=[0,1]^{d\times d}$. Suppose that $f\in X^{2,p}(\Omega)$ with $2\leq p\leq \infty$ satisfies $\|f\|_{X^{2,p}(\Omega)}\leq 1$. Then, for sufficiently small $\epsilon>0$, there exists a CNN $h\in\mathcal{H}^{W,L}_{2k+1}(\mathbb{R}^{d\times d})$ with width $W=\mathcal{O}(\epsilon^{-1/2}|\log_2\epsilon|^{3(d^2-1)/2})$ and depth $L=\mathcal{O}(|\log_2\epsilon|)$, such that 
	\begin{equation*}
	\big\|f-h\big\|_{L^p(\Omega)}\leq\epsilon.
	\end{equation*}
	Moreover, the size of $h$ satisfies
	\begin{equation*}
	\size(h)=\mathcal{O}\big(\epsilon^{-\frac{1}{2}}|\log_2\epsilon|^{\frac{3(d^2-1)}{2}+1}\big).
	\end{equation*}
\end{corollary}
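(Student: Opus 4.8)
The plan is to derive Corollary \ref{cor:apprate} as a direct specialization of Theorem \ref{thm:approrate}. The theorem already produces, for every sufficiently large $N$, a CNN whose width $W=2Nd^2$, depth $L=2(2\lceil\log_2N\rceil+3)\lceil\log_2d\rceil+6d$, error, and $\size(h)\le 24(2k+1)^2d^5N\log_2N$ are all explicit in $N$. It therefore suffices to substitute the choice of $N$ announced immediately before the corollary, namely $N=\lceil(6\beta\log_2\beta)^\beta\gamma^{-1}(\frac{3p-1}{2p-1})^\beta\epsilon^{-p/(2p-1)}|\log_2\epsilon|^\beta\rceil$ with $\gamma=(2^{(1-1/p)d^2-2})^{p/(2p-1)}$, and then verify that this forces the error below $\epsilon$ and that the complexity collapses to the stated asymptotics. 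Since $\epsilon^{-p/(2p-1)}\to\infty$ as $\epsilon\to 0^+$, we have $N\to\infty$, so the ``sufficiently large $N$'' hypothesis of the theorem is satisfied for all small enough $\epsilon$.

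First I would reduce the requirement that the right-hand side of (\ref{eq:mainerror}) be at most $\epsilon$ to a single clean inequality. Using $2-\frac1p=\frac{2p-1}{p}$ and $3-\frac1p=\frac{3p-1}{p}$, I rearrange
\[
\frac{4}{2^{(1-\frac1p)d^2}}\cdot\frac{(\log_2N)^{(3-\frac1p)(d^2-1)}}{N^{2-\frac1p}}\le\epsilon
\]
by isolating the $N$-dependent factor and raising both sides to the power $\frac{p}{2p-1}=\frac{1}{2-1/p}$. The exponent on $\log_2N$ then collapses, since $(3-\frac1p)(d^2-1)\cdot\frac{p}{2p-1}=\frac{3p-1}{2p-1}(d^2-1)=\beta$; the power of $N$ becomes exactly $1$; and the constant on the right assembles into $\gamma\,\epsilon^{p/(2p-1)}$. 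Hence the error estimate is equivalent to
\[
\frac{(\log_2N)^\beta}{N}\le\gamma\,\epsilon^{\frac{p}{2p-1}},
\]
which is exactly the bound guaranteed by Lemma \ref{lem:boundcompl} for the stated $N$ and all sufficiently small $\epsilon$. This settles the approximation error.

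It then remains to translate the three complexity quantities into $\epsilon$. With $d$ fixed and $N=\mathcal{O}(\epsilon^{-p/(2p-1)}|\log_2\epsilon|^\beta)$, the width $W=2Nd^2=\mathcal{O}(\epsilon^{-p/(2p-1)}|\log_2\epsilon|^\beta)$ is immediate. For the depth, $\log_2N=\frac{p}{2p-1}|\log_2\epsilon|+\beta\log_2|\log_2\epsilon|+\mathcal{O}(1)=\mathcal{O}(|\log_2\epsilon|)$, so $L=\mathcal{O}(|\log_2\epsilon|)$. For the size, $\size(h)\le 24(2k+1)^2d^5N\log_2N=\mathcal{O}(N\log_2N)=\mathcal{O}(\epsilon^{-p/(2p-1)}|\log_2\epsilon|^{\beta+1})$, picking up exactly one extra logarithmic factor from the $\log_2N$ term.

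The only genuine obstacle is the content already delegated to Lemma \ref{lem:boundcompl}: verifying that the explicit closed form for $N$ actually inverts the relation $\frac{\log_2^\beta N}{N}\le\gamma\epsilon^{p/(2p-1)}$. The difficulty is that $N$ appears simultaneously polynomially and inside the logarithm, so no exact solution exists; the standard remedy is to use that $x\mapsto(\log_2x)^\beta/x$ is eventually decreasing, insert the ansatz $N\asymp\epsilon^{-p/(2p-1)}|\log_2\epsilon|^\beta$, and bound $\log_2N$ by a constant multiple of $|\log_2\epsilon|$ so that the cross terms are absorbed into the constant $(6\beta\log_2\beta)^\beta$. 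Once this lemma is in hand, the exponent bookkeeping in the error reduction and the asymptotic substitutions for $W$, $L$, and $\size(h)$ are entirely routine.
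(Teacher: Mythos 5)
Your proposal is correct and follows essentially the same route as the paper: the corollary is obtained by substituting the explicit choice $N=\lceil(6\beta\log_2\beta)^{\beta}\gamma^{-1}(\tfrac{3p-1}{2p-1})^{\beta}\epsilon^{-p/(2p-1)}|\log_2\epsilon|^{\beta}\rceil$ into Theorem \ref{thm:approrate}, reducing the error requirement to $\log_2^{\beta}N/N\leq\gamma\epsilon^{p/(2p-1)}$ via the same exponent manipulation and discharging it with Lemma \ref{lem:boundcompl}. Your bookkeeping for $W$, $L$, and $\size(h)$ matches the paper's (largely implicit) asymptotic translation.
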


Before turning to the proof of the theorem, we briefly compare our results with existing approximation methods for Korobov functions.

Montanelli and Du \cite{montanelli2019new} studied the approximation of Korobov functions using deep ReLU FNNs. With the error measured in the $L^\infty([0,1]^d)$ norm, they showed that an accuracy $\epsilon$ can be achieved with network size of order
\begin{equation*}
\mathcal{O}\big(\epsilon^{-\frac{1}{2}}|\log_2\epsilon|^{\frac{3}{2}(d-1)+1}\big). 
\end{equation*}
This result represents a substantial reduction in network complexity, since the input dimension $d$ enters only through the logarithmic factor $|\log_2\epsilon|$. Their work therefore constitutes an important advance in the approximation of Korobov functions using deep ReLU FNNs. However, despite their strong approximation power, fully connected architectures do not exploit the spatial locality and hierarchical feature extraction that are intrinsic to CNNs.

Mao and Zhou \cite{mao2022approximation} investigated the approximation of Korobov functions using 1D deep ReLU CNNs, with errors measured in the $L^p([0,1]^d)$ norm. In their analysis, the network size depends explicitly on the parameter $p$. More precisely,
the required network size scales as
\begin{equation*}
\mathcal{O}(\epsilon^{-\frac{p}{2p-1}}|\log_2\epsilon|^{\frac{3p-1}{2p-1}(d-1)+2})
\end{equation*}
for $2\leq p\leq\infty$. For the case of $p=\infty$, this complexity is comparable to the result by Montanelli and Du \cite{montanelli2019new}, differing only by a factor of $|\log_2\epsilon|$. In comparison, our result establishes a fully constructive approximation theory for 2D deep
ReLU CNNs on domains $\Omega=[0,1]^{d\times d}$. As shown in Corollary \ref{cor:apprate},
for all $2\le p\le\infty$, an $\varepsilon$-accurate approximation can be achieved with network size
\begin{equation*}
\size(h)=\mathcal{O}\big(\epsilon^{-\frac{1}{2}}|\log_2\epsilon|^{\frac{3(d^2-1)}{2}+1}\big).
\end{equation*}
For $2\leq p<\infty$, this network size bound removes the explicit $p$-dependence appearing in the result of Mao and Zhou, leading to a strictly improved complexity bound with respect to $p$.

It is important to emphasize that all network weights in our construction, including the convolutional kernels, biases, and coefficient vectors, depend continuously on the target function being approximated. To assess the optimality of our results, we compare the resulting network complexity bounds with known lower bounds from the literature. Under the assumption of continuous weight selection, Blanchard and Bennouna \cite{blanchard2021shallow} proved that, with the error measured in the $L^\infty(\Omega)$ norm, any approximation method requires at least
\begin{equation*}
c\epsilon^{-\frac{1}{2}}|\log_2\epsilon|^{\frac{d^2-1}{2}}
\end{equation*}
parameters (for some constant $c>0$) to achieve an $\epsilon$-approximation of all functions from the unit ball of $X^{2,\infty}(\Omega)$. Consequently, up to a logarithmic factor, the network complexity achieved by our construction matches this lower bound. This justifies describing our result as
\textit{near-optimal} in terms of parameter scaling for the space $X^{2,\infty}(\Omega)$. For the space $X^{2,p}(\Omega)$ with $2\le p<\infty$, although the same upper bound holds, the question of whether the network complexity achieved in this work is optimal
remains open.

\section{Proofs of Main Results}\label{sec:proof4thm1}
Recall that, for all $2\leq p\leq \infty$, any function $f\in X^{2,p}(\Omega)$ can be well approximated by its truncated version $f_n^{(1)}$, as described in Subsection \ref{subsec:korov}. The strategy of our proof is to construct a 2D CNN $h_n$ that can accurately represent $f_n^{(1)}$. The main challenge arises from the need to implement the product of all elements of a tensor $X$ in the space $[0,1]^{d\times d}$ through a 2D CNN. This process is crucial for approximating the hierarchical basis functions $\phi_{\bm{l},\bm{i}}$. 

A critical insight from \cite{mao2022approximation} is that in the 1D setting, the product of vector components can be effectively achieved using 1D convolutions by leveraging horizontal shifts (left, right). These two shifts play a pivotal role in operating tensor components and are efficiently implemented by 1D convolutional operations. However, extending this method to 2D CNNs induces additional complexity due to the interplay between horizontal and vertical dimensions. In the 2D setting, shifts can occur in eight different directions: horizontal (left, right), vertical (up, down), and diagonal (top-right, top-left, bottom-right, bottom-left). As a result, the challenge lies in how to utilize 2D convolutions to implement these various shift operations effectively. 

To address this challenge, we introduce basic kernel blocks designed to implement these directional shifts using 2D convolutional operations. Specifically, for any $k\in\mathbb{N}$, we define the basic blocks $S^{s,t}\in \mathbb{R}^{(2k+1)\times (2k+1)}$ for $s,t\in -k:k$ as the matrices with components
\begin{equation*}
[S^{s,t}]_{s^\prime,t^\prime}:=	
\begin{cases}
1, & \text{if}~s^\prime=s~\text{and}~t^\prime=t, \\
0, & \text{otherwise}.
\end{cases}
\end{equation*}
For instance, when $k=1$, the matrices $S^{s,t}$ are as follows   
\begin{equation*}
\begin{aligned}
S^{-1,-1} &= 
\begin{pmatrix}
1 & 0 & 0 \\
0 & 0 & 0 \\
0 & 0 & 0
\end{pmatrix}, &
S^{-1,0} &= 
\begin{pmatrix}
0 & 1 & 0 \\
0 & 0 & 0 \\
0 & 0 & 0
\end{pmatrix}, &
S^{-1,1} &= 
\begin{pmatrix}
0 & 0 & 1 \\
0 & 0 & 0 \\
0 & 0 & 0
\end{pmatrix}, \\
S^{0,-1} &= 
\begin{pmatrix}
0 & 0 & 0 \\
1 & 0 & 0 \\
0 & 0 & 0
\end{pmatrix}, &
S^{0,0} &= 
\begin{pmatrix}
0 & 0 & 0 \\
0 & 1 & 0 \\
0 & 0 & 0
\end{pmatrix}, &
S^{0,1} &= 
\begin{pmatrix}
0 & 0 & 0 \\
0 & 0 & 1 \\
0 & 0 & 0
\end{pmatrix},
\end{aligned}
\end{equation*}

\begin{equation*}
\begin{aligned}
S^{1,-1} &= 
\begin{pmatrix}
0 & 0 & 0 \\
0 & 0 & 0 \\
1 & 0 & 0
\end{pmatrix}, &
S^{1,0} &= 
\begin{pmatrix}
0 & 0 & 0 \\
0 & 0 & 0 \\
0 & 1 & 0
\end{pmatrix}, &
S^{1,1} &= 
\begin{pmatrix}
0 & 0 & 0 \\
0 & 0 & 0 \\
0 & 0 & 1
\end{pmatrix}.
\end{aligned}
\end{equation*}
It is easy to verify that the matrices $S^{1,-1}$, $S^{1,1}$, $S^{-1,-1}$, and $S^{-1,1}$ correspond to diagonal shifts in the directions of top-right, top-left, bottom-right, and bottom-left, respectively. Similarly, $S^{1,0}$ and $S^{-1,0}$ correspond to vertical shifts (up and down), while $S^{0,1}$ and $S^{0,-1}$ correspond to horizontal shifts (left and right).

By incorporating these basic blocks, we obtain the following proposition. It establishes that a specific type of CNN, denoted as $\widetilde{\Pi}_n$, can be constructed to approximate the product of all components from a tensor $X\in [0,1]^{d\times d}$ with a specified error bound. The proof of this proposition is provided in Appendix \ref{app:proof4prod4tensor}.

\begin{proposition}\label{pro:app4prod4tensor}
	Let $k,d\in\mathbb{N}$. For any $n\in\mathbb{N}$, there exists a mapping $\widetilde{\Pi}_n\in\mathcal{C}^{12,L}_{2k+1}(\mathbb{R}^{d\times d},\mathbb{R}^{d\times d})$ with $L=2(2n+3)\cdot\lceil\log_2d\rceil+2(d-1)$ such that
	\begin{equation*}
	\bigg|[\widetilde{\Pi}_n(X)]_{d,d}-\prod_{i,j=1}^d[X]_{i,j}\bigg|\leq 3\cdot 2^{-2n-1}(d^2-1),\quad X\in [0,1]^{d\times d}.
	\end{equation*}
\end{proposition}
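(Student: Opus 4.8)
The plan is to isolate one reusable building block — a two–factor product gadget realized by $2$D convolutions — and then stack copies of it along a binary reduction tree that contracts the whole $d\times d$ array into a single entry, which is finally transported to the corner site $(d,d)$.

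First I would recall the ReLU approximation of the squaring map on $[0,1]$ that underlies the $1$D construction of \cite{mao2022approximation}: a sawtooth/Yarotsky network of depth $O(n)$ realizes a function $\sq_n$ with $\sup_{x\in[0,1]}|\sq_n(x)-x^2|$ of order $2^{-2n}$. Feeding this into the polarization identity $ab=\tfrac12\big[(a+b)^2-a^2-b^2\big]$, with the arguments rescaled so that every squaring acts on $[0,1]$, produces a product gadget $\prd_n(a,b)$ satisfying $|\prd_n(a,b)-ab|\le 3\cdot 2^{-2n-1}$ for $a,b\in[0,1]$ and having depth $2n+3$. I would then embed $\prd_n$ into a $2$D CNN: the two operands, sitting at two spatial sites, are first brought to a common site by one of the eight directional kernels $S^{s,t}$, after which the affine and square operations of $\prd_n$ run channel-wise. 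Carrying the two operands, their sum, the intermediate squarings, and the entries not yet consumed requires only a constant number of channels, which can be held to $12$, so every block lives in $\mathcal{C}^{12,\,\cdot}_{2k+1}(\mathbb{R}^{d\times d},\mathbb{R}^{d\times d})$. Since $a,b\in[0,1]$ forces $ab\in[0,1]$, I would additionally clip each output into $[0,1]$ via $1-\sigma(1-\cdot)$, preserving this invariant for the error analysis below.

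Next I would contract the $d^2$ entries in two phases. In the row phase, each of the $d$ rows is reduced to a single product by a within-row binary tree: at stage $t$ the horizontal shifts $S^{0,1},S^{0,-1}$ align entries $2^{t-1}$ apart (using unit shifts, so stage $t$ costs $2^{t-1}$ shift layers) and $\prd_n$ is applied, so after $\lceil\log_2 d\rceil$ stages each row holds its product. In the column phase the resulting $d$ row–products are reduced the same way with the vertical shifts $S^{1,0},S^{-1,0}$, depositing the global product after another $\lceil\log_2 d\rceil$ stages, and residual unit shifts move it to $(d,d)$. Summing up, the $2\lceil\log_2 d\rceil$ multiplication stages contribute depth $2(2n+3)\lceil\log_2 d\rceil$, while the shift layers telescope to $\sum_{t}2^{t-1}$ per phase, i.e.\ $2(d-1)$ in total, giving $L=2(2n+3)\lceil\log_2 d\rceil+2(d-1)$. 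For the error, the tree contains exactly $d^2-1$ internal nodes; since every intermediate value lies in $[0,1]$ and $(a,b)\mapsto ab$ is $1$–Lipschitz in each argument there, a telescoping triangle–inequality argument propagates the per–node error $3\cdot 2^{-2n-1}$ additively to yield $\big|[\widetilde{\Pi}_n(X)]_{d,d}-\prod_{i,j=1}^d[X]_{i,j}\big|\le 3\cdot 2^{-2n-1}(d^2-1)$, exactly the claimed bound.

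The hard part will be the genuinely two–dimensional bookkeeping, which has no $1$D analogue: at each stage the shift $S^{s,t}$ must align precisely the intended pair of partial products while every already–computed partial product and every as–yet–untouched entry survives intact and is not overwritten, nor contaminated by the zero padding near the boundary. Verifying that the horizontal, vertical, and diagonal shifts compose without collisions and steer the final value to the corner, all within the fixed budget of $12$ channels, is where the care lies; by contrast the analytic estimates reduce to the single–multiplication bound together with the Lipschitz telescoping.
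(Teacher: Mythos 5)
Your proposal is correct and follows essentially the same route as the paper: the polarization-based product gadget $\prd_n$ built from the Yarotsky squaring network, a row-then-column binary reduction tree implemented with the shift kernels $S^{s,t}$, the identical depth accounting $2(2n+3)\lceil\log_2 d\rceil+2(d-1)$, and the additive propagation of the per-node error over the $d^2-1$ internal nodes. The only cosmetic difference is that you clip each gadget output into $[0,1]$, whereas the paper proves directly that $\prd_n$ already maps into $[0,1]$, so no clipping layer is needed.
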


For any $\bm{l}\in\mathbb{N}^{d^2}$, let $I_{\bm l}$ and $\phi_{\bm{l},\bm{i}}$ be defined as in Subsection \ref{subsec:korov}, with $D$ replaced by $d^2$ and $\bm{x}$ replaced by the vectorization $\vect(X)$ of the input tensor $X\in[0,1]^{d\times d}$, respectively. From Proposition \ref{pro:app4prod4tensor}, we obtain the following result, which shows that there exists a network, denoted as $g_{\bm{l},\bm{i}}$, capable of approximating hierarchical basis functions $\phi_{\bm{l},\bm{i}}$ in Korobov spaces with controlled accuracy. The proof can be found in Appendix \ref{app:proof4prod4basis}.

\begin{proposition}\label{pro:app4prod4basis}
	Let $k,d\in\mathbb{N}$ with $d\geq 3$, and let $\bm{l}\in\mathbb{N}^{d^2}$. For any $n\in\mathbb{N}$ and $\bm{i}\in I_{\bm l}$, there exists a mapping $\bm{g}_{\bm{l},\bm{i}}\in\mathcal{C}^{2d^2,L}_{2k+1}(\mathbb{R}^{d\times d},\mathbb{R}^{d\times d})$ with $L=2(2n+3)\lceil\log_2d\rceil+5d$ such that
	\begin{equation}\label{eq:errbasisfun}
	\Big|[\bm{g}_{\bm{l},\bm{i}}(X)]_{d,d}-\phi_{\bm{l},\bm{i}}(\vect(X))\Big|\leq \frac{3}{2}\cdot 2^{-2n}(d^2-1),\quad X\in [0,1]^{d\times d}.
	\end{equation}
\end{proposition}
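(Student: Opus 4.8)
The plan is to exploit the product structure of the target. Writing $j=(m-1)d+n$ for the vectorization index, we have
\[
\phi_{\bm{l},\bm{i}}(\vect(X))=\prod_{m,n=1}^{d}\phi_{l_j,i_j}\big([X]_{m,n}\big),
\]
a product of $d^2$ univariate hat-function values, each lying in $[0,1]$. Hence it suffices to (i) build a CNN block $T$ that transforms $X$ into a tensor $Y\in[0,1]^{d\times d}$ whose entries are exactly the values $y_{m,n}:=\phi_{l_j,i_j}([X]_{m,n})$ (in some spatial arrangement), and then (ii) set $\bm{g}_{\bm{l},\bm{i}}=\widetilde{\Pi}_n\circ T$, so that by Proposition~\ref{pro:app4prod4tensor} the output at $(d,d)$ approximates $\prod_{m,n}y_{m,n}=\phi_{\bm{l},\bm{i}}(\vect(X))$. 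Because the product is invariant under permutation of its factors, in step (i) it is enough to deposit the multiset $\{y_{m,n}\}$ into the $d^2$ spatial slots of a single channel in any order.

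The univariate transform is exact in the ReLU model: since $\phi_{l,i}(x)=\phi(2^{l}x-i)$ and $\phi(z)=\sigma(z+1)-2\sigma(z)+\sigma(z-1)$, each $y_{m,n}$ is an affine preprocessing $z=2^{l_j}x-i_j$ followed by a fixed three-term ReLU combination, which one convolutional layer (three feature channels) together with a linear combining layer reproduces without error. The main obstacle is that the affine parameters $(2^{l_j},i_j)$ depend on the spatial position $(m,n)$, whereas a convolution shares one kernel across all positions and the bias $\bm{b}^l\bm{1}_{d\times d}$ is constant within a channel; thus the position-dependent slope and shift cannot be applied by a single in-place convolution. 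I would resolve this by routing: using the shift blocks $S^{s,t}$ composed over several layers (each shifting by at most $k$), I move the entries of $X$ onto dedicated channels---up to $2d^2$ of them---so that each designated entry is aligned with a \emph{per-channel} affine map, whose slope $2^{l_j}$ sits on the channel-diagonal of the kernel and whose shift $-i_j$ is carried by the per-channel bias. After applying the fixed hat nonlinearity channel-wise, a second round of shifts deposits the transformed values into the $d^2$ distinct slots of a single output channel, yielding $Y$.

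The error analysis is then immediate and entirely inherited from Proposition~\ref{pro:app4prod4tensor}: since the transform $T$ is exact and maps into $[0,1]^{d\times d}$ (hat values lie in $[0,1]$), applying $\widetilde{\Pi}_n$ to $Y$ gives
\[
\Big|[\bm{g}_{\bm{l},\bm{i}}(X)]_{d,d}-\prod_{m,n}y_{m,n}\Big|\leq 3\cdot 2^{-2n-1}(d^2-1)=\tfrac{3}{2}\cdot 2^{-2n}(d^2-1),
\]
which is exactly the claimed bound~\eqref{eq:errbasisfun}. For the resource count, the product subnetwork $\widetilde{\Pi}_n$ contributes depth $2(2n+3)\lceil\log_2 d\rceil+2(d-1)$ and a constant channel budget; the routing-and-transform block $T$ is independent of $n$ and contributes $\mathcal{O}(d)$ additional layers---the cost of translating each entry across the grid by unit shifts in both axes---together with the widening to $2d^2$ channels. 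Tallying these yields depth $2(2n+3)\lceil\log_2 d\rceil+5d$ and width $2d^2$, as stated.

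The step I expect to be the crux is the routing in (i): producing a single channel whose $d^2$ entries are precisely the transformed values $y_{m,n}$, with no cross-contamination from the other positions that each rigid shift inevitably carries along. Making this clean---so that each slot of $Y$ receives exactly one transformed value while the spurious shifted copies are either pushed off the grid by the zero-padding or cancelled---is the technical heart of the argument, and it is what forces both the $2d^2$ channels and the extra $\sim 3d$ layers. I would carry it out by the explicit $S^{s,t}$-block bookkeeping analogous to that developed for Proposition~\ref{pro:app4prod4tensor}, deferring the detailed verification to the appendix.
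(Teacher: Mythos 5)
Your proposal follows essentially the same route as the paper: it factors $\bm{g}_{\bm{l},\bm{i}}=\widetilde{\Pi}_n\circ\Phi_{\bm{l},\bm{i}}$, builds the exact entrywise hat-function transform by fanning the $d^2$ entries out to dedicated channels (per-channel slope on the kernel diagonal, per-channel bias for the shift), uses $S^{s,t}$-shift routing with zero-padding to isolate each transformed value and kill the spurious shifted copies (this is exactly the paper's Lemma \ref{lem:takcomp}, costing $\mathcal{O}(d)$ layers), and then inherits the error bound verbatim from Proposition \ref{pro:app4prod4tensor}. The only discrepancy is bookkeeping: your three-term representation $\phi(z)=\sigma(z+1)-2\sigma(z)+\sigma(z-1)$ would require $3d^2$ channels at the widest layer, whereas the stated width $2d^2$ is attained by the paper's two-ReLU form $\phi(z)=\sigma\bigl(1-\sigma(z)-\sigma(-z)\bigr)$, which uses one extra layer instead of a third channel per entry.
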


We are now positioned to prove Theorem \ref{thm:approrate}.
\begin{proof}[Proof of Theorem \ref{thm:approrate}]
	For any $n\in\mathbb{N}$, let $f_{n}^{(1)}$ be the truncated approximation of $f$, as described in (\ref{eq:sparseapp}), with $\bm{x}$ replaced by $\vect(X)$, the vectorized form of the tensor $X\in[0,1]^{d\times d}$, and $D$ replaced by $d^2$. Formally,
	\begin{equation}\label{eq:projoff}
	f_n^{(1)}(\vect(X))=\sum_{|\bm{l}|_1\leq n+d^2-1}\sum_{\bm{i}\in I_{\bm{l}}} v_{\bm{l},\bm{i}}\phi_{\bm{l},\bm{i}}(\vect(X)).
	\end{equation}
	
	Let $K$ denote the kernel 
	\begin{equation*}
	K:=
	\begin{pmatrix}
	S^{0,0}\\
	\vdots\\
	S^{0,0}                           
	\end{pmatrix}\in\mathbb{R}^{\theta_n\times 1\times(2k+1)\times(2k+1)},
	\end{equation*}
	where $\theta_n:=\#\Xi_n$ is the cardinality of the set $\Xi_n:=\{(\bm{l},\bm{i}):|\bm{l}|_1\leq n+d^2-1,\bm{i}\in I_{\bm{l}}\}$.
	Using the mappings $\bm{g}_{\bm{l},\bm{i}}$ from Proposition \ref{pro:app4prod4basis}, we define 
	\begin{equation*}
	\bm{g}:=\bigg(\bigoplus_{(\bm{l},\bm{i})\in\Xi_n}\bm{g}_{\bm{l},\bm{i}}\bigg)\circ\sigma\circ A_{K},
	\end{equation*}	
	where $\oplus$ is the concatenation to be defined in Lemma \ref{lem:con_net} in the appendix. By Proposition \ref{pro:app4prod4basis},  $\bm{g}\in\mathcal{C}^{W,L}_{2k+1}(\mathbb{R}^{d\times d},\mathbb{R}^{\theta_n\times d\times d})$ with $W=2\theta_nd^2$ and $L=2(2n+3)\lceil\log_2d\rceil+6d$, and the size of $\bm{g}$ is bounded above as
	\begin{align*}
	\size(\bm{g})&\leq\sum_{(\bm{l},\bm{i})\in\Xi_n}\size(\bm{g}_{\bm{l},\bm{i}})+\size(\sigma\circ A_{K})\\
	&\leq\theta_n\cdot(2k+1)^2\cdot 4d^4 \big(2(2n+3)\lceil\log_2d\rceil+5d\big)+\theta_n\cdot\big((2k+1)^2+1\big).
	\end{align*}
		
	Let $\mu$ be a bijection from the set $1:\theta_n$ to the set $\Xi_n$. Then, for $c\in 1:\theta_n$ and $X\in[0,1]^{d\times d}$, we have 
	\begin{equation}\label{eq:che2prod}
	[\bm{g}(X)]_{c,d,d}=[\bm{g}_{\mu(c)}(X)]_{d,d}.
	\end{equation}
	We define the vector $\bm{\alpha}\in\mathbb{R}^{\theta_nd^2}$ by
	\begin{equation*}
	\alpha_i:= 
	\begin{cases}
	v_{\mu(c)}, & \text{if}~i=cd^2~\text{for some}~c\in 1:\theta_n, \\ 
	0,     & \text {otherwise}.
	\end{cases}
	\end{equation*}
	Then, using this vector, we construct a function $h_n$ in the space $\mathcal{H}^{W,L}_{2k+1}(\mathbb{R}^{d\times d})$ as follows
	\begin{equation*}
	h_n(X):=\sum_{i=1}^{\theta_nd^2}\alpha_i[\vect(\bm{g}(X))]_i,\quad X\in[0,1]^{d\times d}.
	\end{equation*}	
	It follows from (\ref{eq:che2prod}) that for $X\in[0,1]^{d\times d}$,
	\begin{equation}\label{eq:appproof}
	h_n(X)=\sum_{c=1}^{\theta_n}v_{\mu(c)}[\bm{g}_{\mu(c)}(X)]_{d,d}=\sum_{|\bm{l}|_1\leq n+d^2-1}\sum_{\bm{i}\in I_{\bm{l}}} v_{\bm{l},\bm{i}}[\bm{g}_{\bm{l},\bm{i}}(X)]_{d,d}.
	\end{equation}
	Moreover, the size of $h_n$ is bounded as
	\begin{align*}
	\size(h_n)&\leq\size(\bm{g})+\theta_n\\
	&\leq\theta_n\cdot(2k+1)^2\cdot 4d^4 \big(2(2n+3)\lceil\log_2d\rceil+5d\big)+\theta_n\cdot\big((2k+1)^2+2\big)\\
	&\leq 24(2k+1)^2d^5n\theta_n.
	\end{align*}
	
	Next, we estimate the approximation error in \eqref{eq:sparseappnonconst}, making the implicit constant explicit. We claim that, for $2\le p\le\infty$,	
	\begin{equation*} 
	\Big\|f(\vect(X))-f_n^{(1)}(\vect(X))\Big\|_{L^{p}(\Omega)} \leq 2^{-2d^2-2n}n^{d^2-1}. 
	\end{equation*}
	Indeed, since $\|f\|_{X^{2,p}(\Omega)}\leq 1$ and the family $\{\phi_{\bm{l},\bm{j}}:\bm{j}\in I_{\bm{l}}\}$ consists of functions with pairwise disjoint supports, it follows from (\ref{eq:normofbasis}) and (\ref{eq:bou4cof}) that, for $2\le p<\infty$,	
	\begin{eqnarray*}
	\bigg\|\sum_{\bm{j}\in I_{\bm{l}}}v_{\bm{l},\bm{j}}\phi_{\bm{l},\bm{j}}\bigg\|_{L^{p}(\Omega)}^p&=&\sum_{\bm{j}\in I_{\bm{l}}}\big\|v_{\bm{l},\bm{j}}\phi_{\bm{l},\bm{j}}(\vect(X))\big\|_{L^{p}(\Omega)}^p\\
	&\leq&\sum_{\bm{j}\in I_{\bm{l}}}2^{-p|\bm{l}|_1-pd^2}\|\phi_{\bm{l},\bm{j}}\|_{L^q(\Omega)}^p\,\|f|_{\operatorname{supp}(\phi_{\bm{l},\bm{j}})}\|_{X^{2,p}(\Omega)}^p\|\phi_{\bm{l},\bm{j}}\|_{L^p(\Omega)}^p\\
	&\leq&2^{-pd^2}\cdot2^{-|\bm{l}|_1(p+p/q+1)}\bigg(\frac{2}{q+1}\bigg)^{pd^2/q}\bigg(\frac{2}{p+1}\bigg)^{d^2}, 
	\end{eqnarray*}
	where $q$ denotes the conjugate exponent of $p$. Consequently,
	\begin{eqnarray*}
	\bigg\|\sum_{\bm{j}\in I_{\bm{l}}}v_{\bm{l},\bm{j}}\phi_{\bm{l},\bm{j}}\bigg\|_{L^{p}(\Omega)}\leq 2^{-d^2-2|\bm{l}|_1}.
	\end{eqnarray*}
    Therefore,
	\begin{equation*}
	\Big\|f(\vect(X))-f_n^{(1)}(\vect(X))\Big\|_{L^{p}(\Omega)}\leq2^{-d^2}\sum_{|\bm{l}|_1>n+d^2-1} 2^{-2|\bm{l}|_1},
	\end{equation*}
    and the same bound holds for $p=\infty$, obtained by an analogous argument.
	
	To estimate the sum $\sum_{|\bm{l}|_1>n+d^2-1} 2^{-2|\bm{l}|_1}$, observe that
	\begin{equation*}
	\sum_{|\bm{l}|_1>n+d^2-1} 2^{-2|\bm{l}|_1}=\sum_{l>n+d^2-1}2^{-2l}\binom{l-1}{d^2-1}=2^{-2n-2d^2}\sum_{l=0}^{\infty}2^{-2l}\binom{l+n+d^2-1}{d^2-1}.
	\end{equation*}   
	Applying the first equality in Lemma \ref{lem:boucomb} in the appendix with $x=2^{-2}$, we obtain
	\begin{equation}\label{eq:geneapp}
	\sum_{|\bm{l}|_1>n+d^2-1} 2^{-2|\bm{l}|_1}\leq \frac{4}{3}\cdot2^{-2n-2d^2}\cdot\sum_{l=0}^{d^2-1}\binom{n+d^2-1}{l}\bigg(\frac{1}{3}\bigg)^{d^2-1-l}.
	\end{equation}
	Moreover, for $n\geq d^2-1$,
	\begin{equation*}
	\sum_{l=0}^{d^2-1}\binom{n+d^2-1}{l}\bigg(\frac{1}{3}\bigg)^{d^2-1-l}\leq\binom{n+d^2-1}{d^2-1}\sum_{l=0}^{d^2-1}\bigg(\frac{1}{3}\bigg)^{d^2-1-l}\leq\frac{3}{2}\binom{n+d^2-1}{d^2-1}.
	\end{equation*}
	Substituting this estimate into \eqref{eq:geneapp} yields
	\begin{equation*}
	\sum_{|\bm{l}|_1>n+d^2-1} 2^{-2|\bm{l}|_1}\leq 2^{1-2n-2d^2}\binom{n+d^2-1}{d^2-1}.
	\end{equation*}
	Using the bound $\binom{n+d^2-1}{d^2-1}\leq (2n)^{d^2-1}$, we conclude
	\begin{equation*}
	\sum_{|\bm{l}|_1>n+d^2-1} 2^{-2|\bm{l}|_1}\leq 2^{-d^2-2n}n^{d^2-1},
	\end{equation*}
	and consequently,
	\begin{equation*}
	\Big\|f(\vect(X))-f_n^{(1)}(\vect(X))\Big\|_{L^{p}(\Omega)}\leq 2^{-2d^2-2n}n^{d^2-1},
	\end{equation*}
	 which establishes the claimed bound.
	 
	 As noted in the proof of Proposition \ref{pro:app4prod4basis}, the support of $[\bm{g}_{\bm{l},\bm{j}}(X)]_{d,d}$ is contained in the support of $\phi_{\bm{l},\bm{j}}(\vect(X))$, thus it follows from (\ref{eq:errbasisfun}) that, for $2\leq p<\infty$, 
	\begin{equation*}
	\bigg\|\sum_{\bm{j}\in I_{\bm{l}}}v_{\bm{l},\bm{j}}\big(\phi_{\bm{l},\bm{j}}(\vect(X))-[\bm{g}_{\bm{l},\bm{j}}(X)]_{d,d}\big)\bigg\|_{L^p(\Omega)}
	\leq\frac{3}{2}(d^2 - 1) 2^{-d^2-2n}\,2^{-|\bm{l}|_1(1+1/q)}.
	\end{equation*}	 
    Consequently, by \eqref{eq:projoff} and \eqref{eq:appproof},
	\begin{eqnarray*}
	\Big\|f_n^{(1)}-h_n\Big\|_{L^p(\Omega)}
	&\leq&\sum_{|\bm{l}|_1\leq n+d^2-1}\frac{3}{2}(d^2 - 1) 2^{-d^2-2n}\cdot 2^{-|\bm{l}|_1(1+1/q)}\\
	&\leq&\frac{3}{2}(d^2-1)2^{-d^2(2+1/q)-2n}\sum_{l=0}^{n-1} 2^{-l(1+1/q)} \binom{l+d^2-1}{d^2-1}.
	\end{eqnarray*}	
    By applying the second equality in Lemma \ref{lem:boucomb} with $x=2^{-(1+1/q)}$, we obtain
	\begin{equation*}
	\Big\|f_n^{(1)}(\vect(X))-h_n(\vect(X))\Big\|_{L^p(\Omega)}\leq\frac{3}{2}(d^2-1)2^{-2n-d^2},
	\end{equation*}	
	and an analogous argument yields the same bound for the case $p=\infty$.	Combining this estimate with the bound for $\big\|f-f_n^{(1)}\big\|_{L^p(\Omega)}$, we arrive at
	\begin{equation}\label{eq:errtotinf}
	\big\|f-h_n\big\|_{L^p(\Omega)}\leq2^{2-d^2-2n}n^{d^2-1}, \qquad 2\le p\le\infty.
	\end{equation}	
		
	For any $N\in\mathbb{N}$, define
	\begin{equation*}
	\tau_N:=\max\bigg\{n\in\mathbb{N}:\sum_{|\bm{l}|_1\leq n+d^2-1}\# I_{\bm{l}}\leq N\bigg\}.
	\end{equation*}
	By definition, we have $\theta_{\tau_N}\leq N$. According to \cite[Lemma 3.6]{bungartz2004sparse}, $\tau_N$ satisfies
	\begin{equation*}
	\log_2\left(\frac{N}{(\log_2N)^{d^2-1}}\right)\leq \tau_N\leq \log_2N.
	\end{equation*}
	Substituting $n=\tau_N$ into the inequality (\ref{eq:errtotinf}), we obtain, for $N\geq\theta_{d^2-1}$,
	\begin{equation*}
	\big\|f-h_{\tau_N}\big\|_{L^p(\Omega)}\leq \frac{4}{2^{d^2}}\frac{\big(\log_2N\big)^{3(d^2-1)}}{N^2}.
	\end{equation*}
	Moreover, $h_{\tau_N}$ belongs to the hypothesis class $\mathcal{H}^{W,L}_{2k+1}(\mathbb{R}^{d\times d})$ with 
	\begin{equation*}
	W=2Nd^2,\qquad L=2(2\lceil\log_2N\rceil+3)\lceil\log_2d\rceil+6d,
	\end{equation*}
    and its network size satisfies 
	\begin{equation*}
	\size(h_{\tau_N})\leq 24(2k+1)^2d^5N\log_2N.
	\end{equation*}	
	This completes the proof of Theorem \ref{thm:approrate}.
\end{proof}

\section{Conclusion}\label{sec:conclu}
We have introduced a constructive framework based on basic kernel blocks and multi-channel architectures to establish explicit upper bounds on the network complexity required for approximating Korobov functions by 2D deep ReLU CNNs. Our analysis shows that 2D CNNs can approximate such functions efficiently, substantially alleviating the curse of dimensionality inherent in high-dimensional approximation problems. Furthermore, under the continuous weight selection model, the resulting complexity bounds are shown to be nearly optimal. Overall, this work lays a foundation for approximation theory in 2D CNN-based deep learning models, which contributes to better understanding of their generalization properties.

CNNs with periodic padding are also of independent interest, particularly in the context of operator learning for numerical partial differential equations with periodic boundary conditions.
Although the large-kernel decomposition theorem in \cite{he2022approximation} remains valid for periodic padding, the main approximation results of the present paper do not directly extend to this setting. In particular, the proof of Theorem \ref{thm:approrate} relies crucially on Proposition \ref{pro:app4prod4basis}, whose argument in turn depends on Lemma \ref{lem:takcomp} in Appendix \ref{app:proof4prod4basis}. This lemma exploits localization and boundary-decoupling properties that are specific to the zero-padding setting and fail under periodic padding. 
Consequently, extending Proposition \ref{pro:app4prod4basis}, and hence Theorem \ref{thm:approrate} as well as Corollary \ref{cor:apprate}, to CNNs with periodic padding would require new analytical tools or alternative constructions, which we leave for future investigations.

The CNN architectures studied in this paper are naturally contained as special cases of ResNet-type networks (by disabling skip connections). Consequently, our approximation results extend directly to ResNet-type architectures in terms of representational capabilities. Regarding activation functions, Zhang et al. \cite{zhang2024deep} showed that ReLU networks can be approximated, on bounded domains, by networks with smooth activations such as Sigmoid, Tanh, Swish, or GELU, with only moderate increases in width and depth. These results suggest that the approximation rates established here should also hold for CNNs with such activations. Nevertheless, our analysis is fully constructive and relies essentially on the piecewise-linear nature of ReLU, particularly in the construction of hierarchical basis functions.

The constructive technique developed here, centered on directional kernel blocks, provides a principled pathway for extending the approximation theory to CNNs in higher dimensions. The central idea—implementing global tensor operations through compositions of localized directional shifts—is, in principle, applicable beyond two dimensions. However, the complexity of such extensions grows rapidly with the dimension. For example, in an $n$-dimensional setting with small kernel sizes, the number of directional shifts that must be coordinated increases exponentially with $n$, leading to substantially more intricate network constructions and complexity analyses. Therefore, our detailed resolution of the 2D case serves a critical dual purpose: It resolves the approximation problem for the practically important class of image-like data, and it establishes the essential methodological foundation and a concrete prototype for future theoretical investigations of higher-dimensional CNNs. In this sense, the transition from 1D sequences to 2D spatial grids represents a pivotal and nontrivial step in the hierarchy of constructive CNN approximation theory.

\section*{Acknowledgments}
The authors would like to thank the anonymous reviewers and the editor for their valuable comments and suggestions, which helped to improve the quality of this paper.

\bibliographystyle{plain}
\bibliography{2dCNN_Korobov_ref.bib}

\begin{appendices}

\section{Basic CNN Constructions}\label{app:basiccon}
In this section, we collect some important 2D deep ReLU CNN constructions which will be used repeatedly to construct more complex networks.

\begin{lemma}[Widening CNNs]\label{lem:wid_cnn}
	Let $k,d,W_1,W_2,L,c_0,c_L\in\mathbb{N}$ with $W_1\leq W_2$. For $L\geq 2$, the following inclusion holds: 
	\begin{equation*}
	\mathcal{C}^{W_1,L}_{2k+1}(\mathbb{R}^{c_0\times d\times d},\mathbb{R}^{c_L\times d\times d})\subset \mathcal{C}^{W_2,L}_{2k+1}(\mathbb{R}^{c_0\times d\times d},\mathbb{R}^{c_L\times d\times d}).
	\end{equation*}
\end{lemma}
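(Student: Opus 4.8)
The plan is to realize any width-$W_1$ network as a width-$W_2$ network by \emph{padding a single hidden layer with dummy channels} that carry no information. Concretely, take an arbitrary $\bm{h}^L \in \mathcal{C}^{W_1,L}_{2k+1}(\mathbb{R}^{c_0\times d\times d},\mathbb{R}^{c_L\times d\times d})$ realized by an architecture with channel sizes $c_0,c_1,\ldots,c_L$, kernels $K^l$ and biases $\bm{b}^l$, so that $\max\{c_1,\ldots,c_L\}=W_1$. Since $L\geq 2$, layer $1$ is a genuine hidden layer (it is not the output-producing layer), so I would widen precisely this layer: define a new network $\tilde{\bm{h}}^L$ with channel sizes $\tilde{c}_0=c_0$, $\tilde{c}_1=W_2$, and $\tilde{c}_l=c_l$ for $2\le l\le L$.

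The new parameters are obtained by embedding the old ones and zero-filling. For the first layer, set $\tilde{K}^1_{p,q}=K^1_{p,q}$ and $[\tilde{\bm{b}}^1]_p=[\bm{b}^1]_p$ on the old output channels $p\in 1:c_1$, and set $\tilde{K}^1_{p,q}=\bm{0}_{(2k+1)\times(2k+1)}$, $[\tilde{\bm{b}}^1]_p=0$ on the new channels $p\in c_1+1:W_2$. For the second layer, set $\tilde{K}^2_{p,q}=K^2_{p,q}$ on the old input channels $q\in 1:c_1$ and $\tilde{K}^2_{p,q}=\bm{0}_{(2k+1)\times(2k+1)}$ on the new input channels $q\in c_1+1:W_2$, while keeping $\tilde{\bm{b}}^2=\bm{b}^2$. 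All remaining layers are left untouched: $\tilde{K}^l=K^l$ and $\tilde{\bm{b}}^l=\bm{b}^l$ for $3\le l\le L$.

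I would then verify, by a short induction on the layer index, that $\tilde{\bm{h}}^L=\bm{h}^L$ as maps on $\mathbb{R}^{c_0\times d\times d}$. The base step uses that $\tilde{\bm{h}}^1$ reproduces $\bm{h}^1$ on the channels $1:c_1$, while on each new channel $p\in c_1+1:W_2$ the pre-activation is $\bm{0}_{d\times d}$ (zero kernel convolved with the input, plus zero bias), and $\sigma(0)=0$ forces these channels to vanish identically. For the step at layer $2$, the convolution formula $[\tilde{K}^2\ast\tilde{\bm{h}}^1]_p=\sum_{q=1}^{W_2}\tilde{K}^2_{p,q}\ast[\tilde{\bm{h}}^1]_q$ shows that the contributions of the new, identically-zero channels are annihilated by the zero columns of $\tilde{K}^2$, so $\tilde{\bm{h}}^2=\bm{h}^2$; layers $3$ through $L$ then coincide trivially. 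Finally, the new architecture satisfies $\max\{\tilde{c}_1,\ldots,\tilde{c}_L\}=\max\{W_2,c_2,\ldots,c_L\}=W_2$, because $W_2\ge W_1\ge c_l$ for every $l$, so $\bm{h}^L=\tilde{\bm{h}}^L\in\mathcal{C}^{W_2,L}_{2k+1}(\mathbb{R}^{c_0\times d\times d},\mathbb{R}^{c_L\times d\times d})$, which yields the claimed inclusion.

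The step that deserves the most care --- and the reason the hypothesis $L\ge 2$ is indispensable --- is preserving the prescribed input and output channel counts $c_0$ and $c_L$. Widening must be applied to a hidden layer together with its successor; one cannot touch the output layer $L$, since $\tilde{c}_L$ is forced to equal $c_L$. When $L=1$ there is no hidden layer available to absorb the extra channels (the single layer is simultaneously input-facing and output-producing, with $c_L$ fixed), which is exactly why the lemma excludes that case. Apart from this bookkeeping, the only thing to confirm is the mechanical fact that a zero kernel under the zero-padding convolution produces the zero matrix and that $\sigma$ fixes it, so no approximation estimates enter the argument.
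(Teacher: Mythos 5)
Your proposal is correct and follows essentially the same route as the paper: widen only the first layer to $W_2$ channels by appending zero kernel slices and zero biases, zero-fill the corresponding input-channel slots of the second layer's kernel, and observe that the padded channels are identically zero after ReLU so the composed map is unchanged. Your additional remarks on why $L\ge 2$ is needed and why the resulting maximum channel size equals $W_2$ are accurate and merely make explicit what the paper leaves implicit.
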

\begin{proof}
	Let $\bm{f}\in \mathcal{C}^{W_1,L}_{2k+1}(\mathbb{R}^{c_0\times d\times d},\mathbb{R}^{c_L\times d\times d})$. By definition, we have
	\begin{equation*}
	\bm{f}=\sigma\circ A_{K^L,\bm{b}^L}\circ\cdots\circ\sigma\circ A_{K^1,\bm{b}^1},
	\end{equation*} 
	where $K^l\in\mathbb{R}^{c_{l}\times c_{l-1}\times (2k+1)\times (2k+1)}$ and $\bm{b}^l\in\mathbb{R}^{c_l}$ for $l\in 1:L$. We extend the first layer to a larger width $W_2\geq W_1$. Define
	\begin{equation*}
	\scalebox{0.9}{$
	\tilde{K}^1:=\begin{pmatrix}
	K^1\\
	\bm{0}_{(W_2-c_1)\times c_0\times(2k+1)\times(2k+1)}
	\end{pmatrix}	
	\in \mathbb{R}^{W_2\times c_0\times (2k+1)\times (2k+1)},~~\tilde{\bm{b}}^1:=\begin{pmatrix}
	\bm{b}^1\\
	\bm{0}_{W_2-c_1}
	\end{pmatrix}\in\mathbb{R}^{W_2}.
	$}
	\end{equation*}
	For any input $X\in \mathbb{R}^{c_0\times d\times d}$, we have
	\begin{equation*}
	\sigma\circ A_{\tilde{K}^1,\tilde{\bm{b}}^1}(X)=\begin{pmatrix}
	\sigma\circ A_{K^1,\bm{b}^1}(X)\\
	\bm{0}_{(W_2-c_1)\times d\times d}
	\end{pmatrix}
	\in\mathbb{R}^{W_2\times d\times d}.
	\end{equation*}	
	Next, set 
	\begin{equation*}
	\tilde{K}^2=\begin{pmatrix}
	K^2,\bm{0}_{c_2\times (W_2-c_1)\times (2k+1)\times (2k+1)}
	\end{pmatrix}\in \mathbb{R}^{c_2\times W_2\times (2k+1)\times (2k+1)}~~\text{and}~~\tilde{\bm{b}}^2=\bm{b}^2\in \mathbb{R}^{c_2}.
	\end{equation*}
	We further obtain
	\begin{equation*}
	\sigma\circ A_{\tilde{K}^2,\tilde{\bm{b}}^2}\circ\sigma\circ A_{\tilde{K}^1,\tilde{\bm{b}}^1}(X)=\sigma\circ A_{K^2,\bm{b}^2}\circ\sigma\circ A_{K^1,\bm{b}^1}(X).
	\end{equation*}	
	Let $\bm{g}=\sigma\circ A_{K^L,\bm{b}^L}\circ\cdots\circ\sigma\circ A_{K^3,\bm{b}^3}\circ\sigma\circ A_{\tilde{K}^2,\tilde{\bm{b}}^2}\circ\sigma\circ A_{\tilde{K}^1,\tilde{\bm{b}}^1}$, then
	\begin{equation*}
	\bm{f}=\bm{g}\in \mathcal{C}^{W_2,L}_{2k+1}(\mathbb{R}^{c_0\times d\times d},\mathbb{R}^{c_L\times d\times d}), 
	\end{equation*} 
	from which the desired result follows.	
\end{proof}

\begin{lemma}[Deepening CNNs]\label{lem:dep_cnn}
	Let $k,d,W,L_1,L_2,c_0,c_{L_1}\in\mathbb{N}$ with $L_1\leq L_2$. The following inclusion holds: 
	\begin{equation*}
	\mathcal{C}^{W,L_1}_{2k+1}(\mathbb{R}^{c_0\times d\times d},\mathbb{R}^{c_{L_1}\times d\times d})\subset \mathcal{C}^{W,L_2}_{2k+1}(\mathbb{R}^{c_0\times d\times d},\mathbb{R}^{c_{L_1}\times d\times d}).
	\end{equation*}
\end{lemma}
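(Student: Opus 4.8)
The plan is to realize any depth-$L_1$ network as a depth-$L_2$ network by appending $L_2-L_1$ layers that act as the identity, in the same spirit as the widening argument of Lemma \ref{lem:wid_cnn}. Given $\bm{f}\in\mathcal{C}^{W,L_1}_{2k+1}(\mathbb{R}^{c_0\times d\times d},\mathbb{R}^{c_{L_1}\times d\times d})$ written as $\bm{f}=\sigma\circ A_{K^{L_1},\bm{b}^{L_1}}\circ\cdots\circ\sigma\circ A_{K^1,\bm{b}^1}$, the crucial observation is that its output is nonnegative entrywise, since the final operation applied is the ReLU $\sigma$. This nonnegativity is exactly what will allow an extra ReLU layer to behave as the identity.

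Concretely, I would define, for each $l\in L_1+1:L_2$, a kernel $\tilde K^l\in\mathbb{R}^{c_{L_1}\times c_{L_1}\times(2k+1)\times(2k+1)}$ that is block-diagonal in the channel indices, carrying the basic block $S^{0,0}$ on the diagonal and $\bm{0}_{(2k+1)\times(2k+1)}$ off the diagonal, together with zero bias $\tilde{\bm{b}}^l=\bm{0}_{c_{L_1}}$. A direct computation from the definition of the single-channel convolution shows $S^{0,0}\ast Y_q=Y_q$, hence $\tilde K^l\ast Y=Y$ for every $Y\in\mathbb{R}^{c_{L_1}\times d\times d}$; that is, each $A_{\tilde K^l,\tilde{\bm{b}}^l}$ is the identity map on $\mathbb{R}^{c_{L_1}\times d\times d}$.

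Next I would verify that composing these layers onto $\bm{f}$ leaves the function unchanged. Writing $Y=\bm{f}(X)\geq 0$, the first appended layer gives $\sigma\circ A_{\tilde K^{L_1+1},\tilde{\bm{b}}^{L_1+1}}(Y)=\sigma(Y)=Y$ because $Y$ is nonnegative; as the result is again nonnegative, this propagates unchanged through all remaining appended layers. Therefore $\bm{g}:=\sigma\circ A_{\tilde K^{L_2},\tilde{\bm{b}}^{L_2}}\circ\cdots\circ\sigma\circ A_{\tilde K^{L_1+1},\tilde{\bm{b}}^{L_1+1}}\circ\bm{f}$ satisfies $\bm{g}=\bm{f}$, while by construction $\bm{g}$ is a depth-$L_2$ CNN with kernel spatial size $2k+1$. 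Finally, every appended layer has channel size $c_{L_1}=c_L\leq W$, so the maximum channel size over all layers is unchanged, giving $\bm{g}\in\mathcal{C}^{W,L_2}_{2k+1}(\mathbb{R}^{c_0\times d\times d},\mathbb{R}^{c_{L_1}\times d\times d})$ and hence the claimed inclusion.

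The main obstacle, and the only point requiring genuine care, is precisely the interaction of the appended ReLU with the identity convolution: a naive ``identity layer'' need not reproduce its input once $\sigma$ is applied. The argument hinges on the fact that every layer of the original network---including the last---ends with $\sigma$, which guarantees that the tensor fed into the appended layers is already nonnegative. This is also why the layers must be appended at the output rather than prepended at the input, since the input $X\in\mathbb{R}^{c_0\times d\times d}$ may have negative entries and $\sigma$ would then distort it.
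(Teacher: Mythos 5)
Your proof is correct and follows essentially the same route as the paper: append layers with block-diagonal kernels built from $S^{0,0}$ and zero biases, which act as the identity. Your explicit justification that the appended ReLUs are harmless because the output of $\bm{f}$ is already nonnegative is a point the paper leaves implicit, but the construction is identical.
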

\begin{proof}
	Let $S^{0,0}\in \mathbb{R}^{(2k+1)\times (2k+1)}$ be the basic block as defined in Section \ref{sec:proof4thm1}. For $l\in (L_1+1):L_2$, take 
	\begin{equation*}
	K^l:=\begin{pmatrix}
	S^{0,0} &        &        &        \\
	&S^{0,0} &        &        \\
	&        &\ddots  &        \\
	&        &        & S^{0,0}
	\end{pmatrix}\in\mathbb{R}^{c_{L_1}\times c_{L_1}\times (2k+1)\times (2k+1)},
	~~\bm{b}^l:=\bm{0}_{c_{L_1}}\in\mathbb{R}^{c_{L_1}}.
	\end{equation*}
	Then, for each $\bm{f}\in \mathcal{C}^{W,L_1}_{2k+1}(\mathbb{R}^{c_0\times d\times d},\mathbb{R}^{c_{L_1}\times d\times d})$, we have 
	\begin{equation*}
	\bm{f}=\sigma\circ A_{K^{L_2},\bm{b}^{L_2}}\circ\cdots\circ\sigma\circ A_{K^{L_1+1},\bm{b}^{L_1+1}}\circ \bm{f}\in \mathcal{C}^{W,L_2}_{2k+1}(\mathbb{R}^{c_0\times d\times d},\mathbb{R}^{c_{L_1}\times d\times d}),
	\end{equation*}
	which proves the claim. 	
\end{proof}

\begin{lemma}[Composing CNNs]\label{lem:comp_net}
	Let $k,d,c,c^{\prime},c^{\prime\prime},W_1,W_2,L_1,L_2\in\mathbb{N}$. Suppose $\bm{f}\in\mathcal{C}^{W_1,L_1}_{2k+1}(\mathbb{R}^{c\times d\times d},\mathbb{R}^{c^{\prime}\times d\times d})$ and $\bm{g}\in \mathcal{C}^{W_2,L_2}_{2k+1}(\mathbb{R}^{c^{\prime}\times d\times d},\mathbb{R}^{c^{\prime\prime}\times d\times d})$. The composition mapping $\bm{g}\circ\bm{f}$ satisfies 
	\begin{equation*}
	\bm{g}\circ\bm{f}\in\mathcal{C}^{W,L}_{2k+1}(\mathbb{R}^{c\times d\times d},\mathbb{R}^{c^{\prime\prime}\times d\times d}),
	\end{equation*}
	where $W=\max\{W_1,W_2\}$ and $L=L_1+L_2$, and the size is given by $\size(\bm{g}\circ\bm{f})=\size(\bm{g})+\size(\bm{f})$.
\end{lemma}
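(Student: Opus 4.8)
The plan is to realize $\bm{g}\circ\bm{f}$ directly as a single CNN by concatenating the layer maps of the two factors, with no modification to either network. First I would write each factor in its canonical compositional form: $\bm{f}=\sigma\circ A_{K^{L_1},\bm{b}^{L_1}}\circ\cdots\circ\sigma\circ A_{K^1,\bm{b}^1}$ with channel-size vector $(c,c_1,\ldots,c_{L_1-1},c')$, and $\bm{g}=\sigma\circ A_{\tilde K^{L_2},\tilde{\bm{b}}^{L_2}}\circ\cdots\circ\sigma\circ A_{\tilde K^1,\tilde{\bm{b}}^1}$ with channel-size vector $(c',c'_1,\ldots,c'_{L_2-1},c'')$. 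Since the codomain of $\bm{f}$ and the domain of $\bm{g}$ are both $\mathbb{R}^{c'\times d\times d}$, the composition is well defined, and substituting one expression into the other gives
\[
\bm{g}\circ\bm{f}=\sigma\circ A_{\tilde K^{L_2},\tilde{\bm{b}}^{L_2}}\circ\cdots\circ\sigma\circ A_{\tilde K^1,\tilde{\bm{b}}^1}\circ\sigma\circ A_{K^{L_1},\bm{b}^{L_1}}\circ\cdots\circ\sigma\circ A_{K^1,\bm{b}^1}.
\]
This is precisely the canonical form of a CNN with $L_1+L_2$ layers, input $c$ channels and output $c''$ channels, whose kernels and biases are exactly those of $\bm{f}$ followed by those of $\bm{g}$.

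Next I would verify the two quantitative claims. The channel-size vector of the composite is the concatenation $(c,c_1,\ldots,c_{L_1-1},c',c'_1,\ldots,c'_{L_2-1},c'')$, so by the width convention its width equals $\max\{c_1,\ldots,c_{L_1},c'_1,\ldots,c'_{L_2}\}$, where I have used $c'=c_{L_1}$ and $c''=c'_{L_2}$. Because $\max\{c_1,\ldots,c_{L_1}\}\le W_1$ and $\max\{c'_1,\ldots,c'_{L_2}\}\le W_2$, this is at most $\max\{W_1,W_2\}=W$, establishing $\bm{g}\circ\bm{f}\in\mathcal{C}^{W,L}_{2k+1}(\mathbb{R}^{c\times d\times d},\mathbb{R}^{c''\times d\times d})$ with $L=L_1+L_2$. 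For the size, I would note that $\size$ counts the total number of possibly nonzero entries across all kernels and biases, and that the parameter set of the composite is the disjoint union of those of $\bm{f}$ and $\bm{g}$ (the first affine map of $\bm{g}$ already acts on $c'$ channels, so no reshaping of any kernel is required); additivity then yields $\size(\bm{g}\circ\bm{f})=\size(\bm{f})+\size(\bm{g})$.

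There is no genuinely hard step here — the statement is essentially bookkeeping — but two points deserve care, and I expect the width count to be the only real subtlety. The intermediate dimension $c'=c_{L_1}$, which is the output width of $\bm{f}$, becomes an interior (hidden) channel count of the composite and therefore enters the width maximum; this causes no trouble precisely because the width convention $W=\max\{c_1,\ldots,c_L\}$ already counts the output channel, so $c'\le W_1\le W$ automatically. The second point, which I would dispatch in a line, is to confirm that the trailing $\sigma$ of $\bm{f}$ and the leading affine map $A_{\tilde K^1,\tilde{\bm{b}}^1}$ of $\bm{g}$ meet so as to continue the admissible alternating pattern $\sigma\circ A\circ\cdots\circ\sigma\circ A$, rather than producing two consecutive activations or two consecutive affine maps; this holds automatically since every layer block has the form $\sigma\circ A$.
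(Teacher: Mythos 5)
Your proposal is correct and follows essentially the same route as the paper: write both networks in their canonical form $\sigma\circ A_{K^L,\bm{b}^L}\circ\cdots\circ\sigma\circ A_{K^1,\bm{b}^1}$, concatenate the layer maps, and read off the width as $\max\{W_1,W_2\}$, the depth as $L_1+L_2$, and the size as the sum. Your extra remarks on the intermediate channel count $c'$ and the alternating $\sigma\circ A$ pattern are just the bookkeeping the paper leaves implicit.
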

\begin{proof}
	The mapping $\bm{f}$ has the form $\bm{f}=\sigma\circ A_{K^{L_1},\bm{b}^{L_1}}\circ\cdots\circ\sigma\circ A_{K^1,\bm{b}^1}$, where $K^l\in\mathbb{R}^{c_{l}\times c_{l-1}\times (2k+1)\times (2k+1)}$ and $\bm{b}^l\in\mathbb{R}^{c_l}$ for $l\in 1:L_1$, $c_0=c$, $c_{L_1}=c^{\prime}$, and $W_1=\max\{c_0,c_1,\ldots,c_{L_1}\}$. Similarly, $\bm{g}=\sigma\circ A_{\bar{K}^{L_2},\bar{\bm{b}}^{L_2}}\circ\cdots\circ\sigma\circ A_{\bar{K}^1,\bar{\bm{b}}^1}$, where $\bar{K}^l\in\mathbb{R}^{\bar{c}_{l}\times \bar{c}_{l-1}\times (2k+1)\times (2k+1)}$ and $\bar{\bm{b}}^l\in\mathbb{R}^{\bar{c}_l}$ for $l\in 1:L_2$, $\bar{c}_0=c^{\prime}$, $\bar{c}_{L_2}=c^{\prime\prime}$, and $W_2=\max\{\bar{c}_0,\bar{c}_1,\ldots,\bar{c}_{L_2}\}$. Thus, for the composition $\bm{g}\circ\bm{f}$, we have
	\begin{equation*}
	\bm{g}\circ\bm{f}=\sigma\circ A_{\bar{K}^{L_2},\bar{\bm{b}}^{L_2}}\circ\cdots\circ A_{\bar{K}^1,\bar{\bm{b}}^1}\circ\sigma\circ A_{K^{L_1},\bm{b}^{L_1}}\circ\cdots\circ A_{K^1,\bm{b}^1}\in\mathcal{C}^{W,L}_{2k+1}(\mathbb{R}^{c\times d\times d},\mathbb{R}^{c^{\prime\prime}\times d\times d})
	\end{equation*}
	with $W=\max\{W_1,W_2\}$ and $L=L_1+L_2$. 	
\end{proof}

\begin{lemma}[Concatenating CNNs]\label{lem:con_net}
	Let $k,d,c_0,c_L,W,L\in\mathbb{N}$. Suppose $\bm{f},\bm{g}\in \mathcal{C}^{W,L}_{2k+1}(\mathbb{R}^{c_0\times d\times d},\mathbb{R}^{c_L\times d\times d})$. The concatenation mapping
	\begin{equation*}
	\bm{f}\oplus\bm{g}:\mathbb{R}^{2c_0\times d\times d}\rightarrow \mathbb{R}^{2c_L\times d\times d},~~\begin{pmatrix}
	X\\
	Y
	\end{pmatrix}\mapsto \begin{pmatrix}
	\bm{f}(X)\\
	\bm{g}(Y)
	\end{pmatrix}
	\end{equation*}
	satisfies $\bm{f}\oplus\bm{g}\in \mathcal{C}^{2W,L}_{2k+1}(\mathbb{R}^{2c_0\times d\times d},\mathbb{R}^{2c_L\times d\times d})$, and the size is given by $\size(\bm{f}\oplus\bm{g})=\size(\bm{f})+\size(\bm{g})$.
\end{lemma}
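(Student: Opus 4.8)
The plan is to realize $\bm{f}\oplus\bm{g}$ as a single depth-$L$ CNN on the doubled channel space whose kernels are block-diagonal, so that the two sub-streams never interact. Following the notation in Lemma \ref{lem:comp_net}, I would write $\bm{f}=\sigma\circ A_{K^L,\bm{b}^L}\circ\cdots\circ\sigma\circ A_{K^1,\bm{b}^1}$ with channel profile $(c_0,c_1,\ldots,c_{L-1},c_L)$ and $\bm{g}=\sigma\circ A_{\bar{K}^L,\bar{\bm{b}}^L}\circ\cdots\circ\sigma\circ A_{\bar{K}^1,\bar{\bm{b}}^1}$ with profile $(c_0,\bar{c}_1,\ldots,\bar{c}_{L-1},c_L)$; the two share the input width $c_0$ and output width $c_L$ but may differ at intermediate layers, with $c_l,\bar{c}_l\le W$ for all $l\in 1:L$ since both lie in $\mathcal{C}^{W,L}_{2k+1}$. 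For each $l\in 1:L$ I would then define the block-diagonal kernel and stacked bias
\[
\hat{K}^l=\begin{pmatrix}
K^l & \bm{0}_{c_l\times\bar{c}_{l-1}\times(2k+1)\times(2k+1)}\\
\bm{0}_{\bar{c}_l\times c_{l-1}\times(2k+1)\times(2k+1)} & \bar{K}^l
\end{pmatrix},\qquad
\hat{\bm{b}}^l=\begin{pmatrix}\bm{b}^l\\ \bar{\bm{b}}^l\end{pmatrix},
\]
with the convention $c_0+\bar{c}_0=2c_0$ at the input and $c_L+\bar{c}_L=2c_L$ at the output.

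The core step is an induction on $l$ showing that the stacked network keeps the two halves decoupled, i.e. that the hidden state after $l$ layers applied to $(X;Y)\in\mathbb{R}^{2c_0\times d\times d}$ equals $(\bm{h}^l_{\bm f}(X);\bm{h}^l_{\bm g}(Y))$, where $\bm{h}^l_{\bm f},\bm{h}^l_{\bm g}$ denote the layerwise outputs of $\bm{f},\bm{g}$. The base case $l=0$ is the identity. For the inductive step I would use three facts: the zero-padding map $\iota$ acts channelwise, so it commutes with channel-stacking; the convolution $\hat{K}^l\ast(\cdot)$ sums over input channels, and because the off-diagonal blocks of $\hat{K}^l$ vanish identically, its top $c_l$ output channels depend only on the top $c_{l-1}$ input channels while its bottom $\bar{c}_l$ output channels depend only on the bottom $\bar{c}_{l-1}$ input channels; and both the addition of the stacked bias and the componentwise ReLU respect this block split. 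Combining these, the layer map sends $(\bm{h}^{l-1}_{\bm f}(X);\bm{h}^{l-1}_{\bm g}(Y))$ to $(\bm{h}^l_{\bm f}(X);\bm{h}^l_{\bm g}(Y))$, and taking $l=L$ gives exactly $\bm{f}\oplus\bm{g}$.

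It then remains to read off the width and the size. Each hidden width of the stacked net is $c_l+\bar{c}_l\le 2W$ for $l\in 1:L$, so $\bm{f}\oplus\bm{g}\in\mathcal{C}^{2W,L}_{2k+1}(\mathbb{R}^{2c_0\times d\times d},\mathbb{R}^{2c_L\times d\times d})$. For the size, the only possibly nonzero entries of $\hat{K}^l$ are those of the diagonal blocks $K^l$ and $\bar{K}^l$ (the off-diagonal blocks are identically zero and contribute nothing), and $\hat{\bm{b}}^l$ contains exactly the nonzero entries of $\bm{b}^l$ and $\bar{\bm{b}}^l$; summing over $l\in 1:L$ yields $\size(\bm{f}\oplus\bm{g})=\size(\bm{f})+\size(\bm{g})$.

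I do not expect a deep obstacle here; the work is careful bookkeeping. The point requiring the most attention is the decoupling claim in the second paragraph: since a convolution shares spatial indices across all channels and only sums over the channel axis, I must verify that independence of the two streams comes precisely from the vanishing of the \emph{channel} cross-blocks of $\hat{K}^l$, not from any spatial structure. A secondary point is that the intermediate widths of $\bm{f}$ and $\bm{g}$ need not agree, so the diagonal blocks $K^l$ (size $c_l\times c_{l-1}$) and $\bar{K}^l$ (size $\bar{c}_l\times\bar{c}_{l-1}$) have different shapes; the block-diagonal construction absorbs this automatically, and the uniform bound $c_l+\bar{c}_l\le 2W$ is what delivers the stated width.
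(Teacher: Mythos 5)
Your proposal is correct and follows essentially the same route as the paper: block-diagonal kernels $\hat K^l$ with stacked biases, decoupling of the two channel streams, and the width bound $c_l+\bar c_l\le 2W$ (which the paper handles by invoking Lemma \ref{lem:wid_cnn}). The only difference is that you spell out the layerwise induction that the paper leaves implicit.
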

\begin{proof}
	Recall that $\bm{f}$ and $\bm{g}$ can be formulated as follows:  
	\begin{equation*}
	\bm{f}=\sigma\circ A_{\tilde{K}^{L},\tilde{\bm{b}}^{L}}\circ\cdots\circ\sigma\circ A_{\tilde{K}^1,\tilde{\bm{b}}^1},~~~\bm{g}=\sigma\circ A_{\bar{K}^{L},\bar{\bm{b}}^{L}}\circ\cdots\circ\sigma\circ A_{\bar{K}^1,\bar{\bm{b}}^1}, 
	\end{equation*}
	where $\tilde{K}^l\in\mathbb{R}^{\tilde{c}_{l}\times \tilde{c}_{l-1}\times (2k+1)\times (2k+1)}$, $\bar{K}^l\in\mathbb{R}^{\bar{c}_{l}\times \bar{c}_{l-1}\times (2k+1)\times (2k+1)}$, $\tilde{\bm{b}}^l\in\mathbb{R}^{\tilde{c}_l}$, and $\bar{\bm{b}}^l\in\mathbb{R}^{\bar{c}_l}$. For $l\in 1:L$, let
	\begin{equation*}
	K^l:=\begin{pmatrix}
	\tilde{K}^{l}&\\
	&\bar{K}^{l}
	\end{pmatrix}\in\mathbb{R}^{(\tilde{c}_{l}+\bar{c}_{l})\times (\tilde{c}_{l-1}+\bar{c}_{l-1})\times (2k+1)\times (2k+1)},
	~~\bm{b}^l:=\begin{pmatrix}
	\tilde{\bm{b}}^{l}\\
	\bar{\bm{b}}^{l}
	\end{pmatrix}
	\in\mathbb{R}^{\tilde{c}_l+\bar{c}_l}.
	\end{equation*}
	Then, by Lemma \ref{lem:wid_cnn}, we have
	\begin{equation*}
	\bm{f}\oplus\bm{g}=\sigma\circ A_{K^{L},\bm{b}^{L}}\circ\cdots\circ\sigma\circ A_{K^1,\bm{b}^1}\in \mathcal{C}^{2W,L}_{2k+1}(\mathbb{R}^{2c_0\times d\times d},\mathbb{R}^{2c_L\times d\times d}),
	\end{equation*}
	as claimed.
\end{proof}

\section{Proof of Proposition \ref{pro:app4prod4tensor}}\label{app:proof4prod4tensor}
Let $g:[0,1]\rightarrow[0,1]$ denote the hat function defined by
\begin{equation*}
g(x):=2\sigma(x)-4\sigma(x-1/2)+2\sigma(x-1),\quad \text{for}~x\in[0,1].
\end{equation*}
For any $m\in\mathbb{N}$, we define the iterated function $g_m:[0,1]\rightarrow[0,1]$ as the composition of $g$
applied $m$ times:
\begin{equation*}
g_m(x):=\underbrace{g\circ g\circ\cdots \circ g}_{m}(x).
\end{equation*}
It has been demonstrated in \cite{yarotsky2017error} that for $x\in[0,1]$, the following equality holds  
\begin{equation*}
x^2=x-\sum_{m=1}^{\infty}4^{-m}g_m(x).
\end{equation*}
For any $n\in\mathbb{N}$, let $\sq_n:[0,1]\rightarrow[0,1]$ denote the function defined by
\begin{equation*}
\sq_n(x)=x-\sum_{m=1}^n4^{-m}g_m(x),\quad \text{for}~x\in[0,1]. 
\end{equation*}
According to \cite{yarotsky2017error}, $\sq_{n}(x)$ is the piecewise linear interpolant of $f(x)=x^2$ with $2^n+1$ uniformly distributed breakpoints $\frac{0}{2^n},\frac{1}{2^n},\ldots,\frac{2^n}{2^n}$. Moreover, for any $x\in[0,1]$,
the difference between $\sq_n(x)$ and $x^2$ satisfies
\begin{equation*}
\sq_n(x)-x^2\in [0,4^{-(n+1)}].
\end{equation*}
We extend $\sq_n$ to a mapping from $[0,1]^{c\times d\times d}$ to $[0,1]^{c\times d\times d}$ by applying $\sq_n$ element-wise to each component of the input tensor. For this extended mapping, we have the following lemma.

\begin{lemma}\label{lem:squappr4tensor}
	Let $k,d,c\in\mathbb{N}$. For any $n\in\mathbb{N}$, the mapping $\sq_n$ belongs to the class $\mathcal{C}^{4c,2(n+1)}_{2k+1}(\mathbb{R}^{c\times d\times d},\mathbb{R}^{c\times d\times d})$ and satisfies the condition
	\begin{equation*}
	\sq_n(X)-X\odot X \in [0,4^{-n-1}]^{c\times d\times d},\quad X\in[0,1]^{c\times d\times d},
	\end{equation*}
	where $\odot$ denotes the Hadamard (component-wise) product of tensors.
\end{lemma}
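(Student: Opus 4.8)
The plan is to treat the two assertions separately: the error bound follows immediately from the scalar estimate already recorded, whereas membership in $\mathcal{C}^{4c,2(n+1)}_{2k+1}$ is the substantive part and reduces to realizing the \emph{pointwise} map $t\mapsto\sq_n(t)$ by a constant-width CNN. For the error bound, since $\sq_n$ is applied to each entry separately and every entry of $X\in[0,1]^{c\times d\times d}$ lies in $[0,1]$, the identity $[\sq_n(X)]_{q,i,j}=\sq_n([X]_{q,i,j})$ together with $[X\odot X]_{q,i,j}=([X]_{q,i,j})^2$ reduces the claim to the scalar statement $\sq_n(t)-t^2\in[0,4^{-n-1}]$ for $t\in[0,1]$, which is exactly the property quoted from \cite{yarotsky2017error}; hence $\sq_n(X)-X\odot X\in[0,4^{-n-1}]^{c\times d\times d}$.

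For the realization I would first reduce to $c=1$. A single-channel network computing $t\mapsto\sq_n(t)$ at every pixel can be concatenated with $c-1$ identical copies via Lemma \ref{lem:con_net}, producing a depth-preserving network of width $4c$ that applies $\sq_n$ to the $c$ channels independently. Because $\sq_n$ acts pixelwise, no spatial interaction is needed: every convolution uses only the central tap $S^{0,0}$ (all other kernel entries set to $0$), so the object is really a per-pixel ReLU FNN embedded in the CNN framework, with affine shifts such as $t-\tfrac12$ and $t-1$ supplied by the biases.

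The core is a recursive block structure. Setting $g_0:=t$ and $s_0:=t$, I process the recursion $g_m=g(g_{m-1})$ and $s_m=s_{m-1}-4^{-m}g_m$, so that $s_n=t-\sum_{m=1}^n4^{-m}g_m(t)=\sq_n(t)$; crucially, folding $t$ into the running partial square $s_m$ removes the need to carry $t$ as a separate channel. Each block occupies two layers: given the two persistent channels $(g_{m-1},s_{m-1})$, the first layer produces the four channels $\sigma(g_{m-1})$, $\sigma(g_{m-1}-\tfrac12)$, $\sigma(g_{m-1}-1)$, $\sigma(s_{m-1})$, and the second layer forms $g_m=2\sigma(g_{m-1})-4\sigma(g_{m-1}-\tfrac12)+2\sigma(g_{m-1}-1)$ together with $s_m=s_{m-1}-4^{-m}g_m$ as affine combinations of these four, each followed by $\sigma$. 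This keeps the width at $4$, and over $n$ blocks (the final block emitting only the single channel $s_n$) the construction uses $2n\le 2(n+1)$ layers.

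The only genuine obstacle is verifying that the outer $\sigma$ at the end of each sublayer acts as the identity, i.e.\ that every propagated quantity is nonnegative. For $g_m\in[0,1]$ this is immediate since $g$ is the tent map with range $[0,1]$. For $s_m$ I would use that $s_m=\sq_m(t)$ is the piecewise-linear interpolant of the convex function $t^2$ at the nodes $i2^{-m}$, so it lies above $t^2$ and is therefore nonnegative; this simultaneously justifies the pass-through channel for $s_{m-1}$ and the identity action of $\sigma$ on the outputs $g_m$ and $s_m$, making all block computations exact. With nonnegativity established, Lemma \ref{lem:dep_cnn} pads the depth from $2n$ up to $2(n+1)$ and Lemma \ref{lem:con_net} (together with Lemma \ref{lem:wid_cnn}) delivers width $4c$, placing $\sq_n$ in the stated class.
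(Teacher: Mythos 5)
Your proposal is correct and follows essentially the same route as the paper: both maintain the pair $(g_m,\sq_m)$ as persistent channels, spend two convolutional layers per iteration of $g$ using the three ReLU pieces $\sigma(\cdot),\sigma(\cdot-\tfrac12),\sigma(\cdot-1)$ with only the central tap $S^{0,0}$ active, justify the outer $\sigma$ acting as the identity via $g_m\in[0,1]$ and $\sq_m(t)\ge t^2\ge 0$, and handle general $c$ by concatenation. The only cosmetic difference is bookkeeping: the paper spends an explicit duplication layer and a final projection layer to reach depth exactly $2(n+1)$, whereas you absorb these into the first and last blocks and pad the remaining depth with Lemma \ref{lem:dep_cnn}.
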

\begin{proof}	    
	It suffices to show that  $\sq_n\in\mathcal{C}^{4c,2(n+1)}_{2k+1}(\mathbb{R}^{c\times d\times d},\mathbb{R}^{c\times d\times d})$. To this end, we introduce a mapping $\bm{f}^n:[0,1]^{c\times d\times d}\rightarrow[0,1]^{2c\times d\times d}$, defined by
	\begin{equation*}
	\bm{f}^n(X)=\begin{pmatrix}
	\sq_n(X)\\
	g_n(X)
	\end{pmatrix},\quad \text{for}~X\in[0,1]^{c\times d\times d}.
	\end{equation*}
	
	We assert that $\bm{f}^n$ belongs to the class $\mathcal{C}^{4c,2n+1}_{2k+1}(\mathbb{R}^{c\times d\times d},\mathbb{R}^{2c\times d\times d})$. We prove this assertion by induction on $n$. For the base case $n=1$, consider the following kernel $K^0$ and bias $\bm{b}^0$
	\begin{equation*}
	K^0:=
	\begin{pmatrix}
	S^{0,0}    &          &          \\
	& \ddots   &          \\
	&          &  S^{0,0} \\
	S^{0,0}    &          &          \\
	& \ddots   &          \\
	&          &  S^{0,0}                
	\end{pmatrix}
	\in\mathbb{R}^{2c\times c \times(2k+1)\times(2k+1)},~~\bm{b}^0:= \bm{0}_{2c}\in\mathbb{R}^{2c},
	\end{equation*}
	where $S^{0,0}\in\mathbb{R}^{(2k+1)\times(2k+1)}$ is the basic block defined in Section \ref{sec:proof4thm1}.
	With $K^0$ and $\bm{b}^0$, we duplicate $X\in[0,1]^{c\times d\times d}$ as follows:
	\begin{equation*} 
	\bm{f}^{0}(X):=
	\begin{pmatrix}
	X\\
	X
	\end{pmatrix}=\sigma\circ A_{K^0,\bm{b}^0}(X)\in\mathbb{R}^{2c\times d\times d}.
	\end{equation*}		
	Next, we define the kernel $K^{1,1}\in\mathbb{R}^{4c\times 2c \times(2k+1)\times(2k+1)}$ and bias $\bm{b}^{1,1}\in\mathbb{R}^{4c}$ as
	\begin{equation*}
	K^{1,1}:=
	\begin{pmatrix}
	S^{0,0} &          &           &            &          &        \\ 
	& \ddots   &           &            &          &        \\
	&          & S^{0,0}   &            &          &        \\
	&          &           &  S^{0,0}   &          &        \\
	&          &           &            &  \ddots  &        \\ 
	&          &           &            &          &S^{0,0} \\
	&          &           &  S^{0,0}   &          &        \\
	&          &           &            &  \ddots  &        \\ 
	&          &           &            &          &S^{0,0} \\
	&          &           &  S^{0,0}   &          &        \\
	&          &           &            &  \ddots  &        \\ 
	&          &           &            &          &S^{0,0}               
	\end{pmatrix}
	,~~\bm{b}^{1,1}:=-\begin{pmatrix}
	\bm{0}_c\\
	\bm{0}_c\\
	(\frac{\bm{1}}{\bm 2})_{c}\\
	\bm{1}_{c}
	\end{pmatrix}.
	\end{equation*}
	Then, $\bm{f}^{1,1}:=\sigma\circ A_{K^{1,1},\bm{b}^{1,1}}\circ\bm{f}^{0}\in \mathcal{C}^{4c,2}_{2k+1}(\mathbb{R}^{c\times d\times d},\mathbb{R}^{4c\times d\times d})$, and a direct computation gives for any $X\in[0,1]^{c\times d\times d}$,
	\begin{equation*} 
	\bm{f}^{1,1}(X)=
	\begin{pmatrix}
	X\\
	\sigma(X)\\
	\sigma(X-(\frac{\bm{1}}{\bm 2})_{c}\bm{1}_{d\times d})\\
	\sigma(X-\bm{1}_{c}\bm{1}_{d\times d})
	\end{pmatrix}\in\mathbb{R}^{4c\times d\times d}.
	\end{equation*}	
	We further define the kernel $K^{1,2}\in\mathbb{R}^{2c\times 4c\times (2k+1)\times (2k+1)}$ as 
	\begin{equation*}
	\setcounter{MaxMatrixCols}{20}
	\scalebox{0.9}{$
	K^{1,2} := \begin{pmatrix}
	S^{0,0} &      && -\frac{S^{0,0}}{2}&      && S^{0,0} &        &&-\frac{S^{0,0}}{2}& & \\
	&\ddots&&               &\ddots&&         &\ddots  &&              &\ddots & \\
	&       &S^{0,0} &  &  &-\frac{S^{0,0}}{2} & &  & S^{0,0} &  & & -\frac{S^{0,0}}{2}\\
	&       && 2S^{0,0}  &  &&-4S^{0,0}  &  &  & 2S^{0,0} & & \\
	&       &&        &\ddots  &&  &\ddots  &  &  &\ddots & \\
	&       &&        &  &2S^{0,0}   &  &  &-4S^{0,0}   &  && 2S^{0,0} 
	\end{pmatrix}
	$}
	\end{equation*}		
	and the bias $\bm{b}^{1,2}\in\mathbb{R}^{2c}$ as $\bm{b}^{1,2}:=\bm{0}_{2c}$. According to the definition of $g$, we have for any $X\in[0,1]^{c\times d\times d}$,
	\begin{equation*} 
	\bm{f}^1(X):=
	\begin{pmatrix}
	\sq_{1}(X)\\
	g_{1}(X)
	\end{pmatrix}=\sigma\circ A_{K^{1,2},\bm{b}^{1,2}}\circ \bm{f}^{1,1}(X),
	\end{equation*}	
	which, together with $\bm{f}^{1,1}\in \mathcal{C}^{4c,2}_{2k+1}(\mathbb{R}^{c\times d\times d},\mathbb{R}^{4c\times d\times d})$, implies
	\begin{equation*}
	\bm{f}^1\in\mathcal{C}^{4c,3}_{2k+1}(\mathbb{R}^{c\times d\times d},\mathbb{R}^{2c\times d\times d}).
	\end{equation*}
	Thus, the assertion holds for $n=1$. 
	
	For the inductive step, assume for some $n\geq 1$, $\bm{f}^n\in\mathcal{C}^{4c,2n+1}_{2k+1}(\mathbb{R}^{c\times d\times d},\mathbb{R}^{2c\times d\times d})$. We need to show that
	\begin{equation*}
	\bm{f}^{n+1}\in\mathcal{C}^{4c,2(n+1)+1}_{2k+1}(\mathbb{R}^{c\times d\times d},\mathbb{R}^{2c\times d\times d}).
	\end{equation*}
	First, we define the kernel $K^{n+1,1}\in\mathbb{R}^{4c\times 2c \times(2k+1)\times(2k+1)}$ and bias $\bm{b}^{n+1,1}\in\mathbb{R}^{4c}$ as
	\begin{equation*}
	K^{n+1,1}:=
	\begin{pmatrix}
	S^{0,0} &          &           &            &          &        \\ 
	& \ddots   &           &            &          &        \\
	&          & S^{0,0}   &            &          &        \\
	&          &           &  S^{0,0}   &          &        \\
	&          &           &            &  \ddots  &        \\ 
	&          &           &            &          &S^{0,0} \\
	&          &           &  S^{0,0}   &          &        \\
	&          &           &            &  \ddots  &        \\ 
	&          &           &            &          &S^{0,0} \\
	&          &           &  S^{0,0}   &          &        \\
	&          &           &            &  \ddots  &        \\ 
	&          &           &            &          &S^{0,0}               
	\end{pmatrix},
	~~\bm{b}^{n+1,1}:=-\begin{pmatrix}
	\bm{0}_c\\
	\bm{0}_c\\
	(\frac{\bm{1}}{\bm 2})_{c}\\
	\bm{1}_{c}
	\end{pmatrix}.
	\end{equation*}
	By the inductive hypothesis, we have 
	\begin{equation*}
	\bm{f}^{n+1,1}:=\sigma\circ A_{K^{n+1,1},\bm{b}^{n+1,1}}\circ \bm{f}^{n}\in \mathcal{C}^{4c,2n+2}_{2k+1}(\mathbb{R}^{c\times d\times d},\mathbb{R}^{4c\times d\times d}).
	\end{equation*}
	Moreover, for any $X\in[0,1]^{c\times d\times d}$,
	\begin{equation*} 
	\bm{f}^{n+1,1}(X)=
	\begin{pmatrix}
	\sq_n(X)\\
	\sigma(g_n(X))\\
	\sigma\big(g_n(X)-(\frac{\bm{1}}{\bm 2})_{c}\bm{1}_{d\times d}\big)\\
	\sigma\big(g_n(X)-\bm{1}_{c}\bm{1}_{d\times d}\big)
	\end{pmatrix}.
	\end{equation*}	
	Next, we define the kernel $K^{n+1,2}\in\mathbb{R}^{2c\times 4c\times (2k+1)\times (2k+1)}$ as 
	\begin{equation*}
	\setcounter{MaxMatrixCols}{20}
	\scalebox{0.82}{$
	K^{n+1,2}
	:=\begin{pmatrix}
	S^{0,0} &      && -\frac{S^{0,0}}{2^{2n+1}}&      && \frac{S^{0,0}}{2^{2n}} &        &&-\frac{S^{0,0}}{2^{2n+1}}& & \\
	&\ddots&&               &\ddots&&         &\ddots  &&              &\ddots & \\
	&       &S^{0,0} &  &  &-\frac{S^{0,0}}{2^{2n+1}}& &  & \frac{S^{0,0}}{2^{2n}} &  & & -\frac{S^{0,0}}{2^{2n+1}}\\
	&       && 2S^{0,0}  &  &&-4S^{0,0}  &  &  & 2S^{0,0} & & \\
	&       &&        &\ddots  &&  &\ddots  &  &  &\ddots & \\
	&       &&        &  &2S^{0,0}   &  &  &-4S^{0,0}   &  && 2S^{0,0} 
	\end{pmatrix}
	$}
	\end{equation*}		
	and the bias $\bm{b}^{n+1,2}\in\mathbb{R}^{2c}$ as $\bm{b}^{n+1,2}:=\bm{0}_{2c}$. For any $X\in[0,1]^{c\times d\times d}$, we have
	\begin{equation*} 
	\bm{f}^{n+1}(X):=
	\begin{pmatrix}
	\sq_{n+1}(X)\\
	g_{n+1}(X)
	\end{pmatrix}=\sigma\circ A_{K^{n+1,2},\bm{b}^{n+1,2}}\circ\bm{f}^{n+1,1}(X).
	\end{equation*}	
	Then, in light of $\bm{f}^{n+1,1}\in \mathcal{C}^{4c,2n+2}_{2k+1}(\mathbb{R}^{c\times d\times d},\mathbb{R}^{4c\times d\times d})$, we conclude that
	\begin{equation*}
	\bm{f}^{n+1}\in\mathcal{C}^{4c,2(n+1)+1}_{2k+1}(\mathbb{R}^{c\times d\times d},\mathbb{R}^{2c\times d\times d}).
	\end{equation*}
	By the principle of induction, the assertion holds for all $n\in\mathbb{N}$.
	
	Finally, to complete the proof, we need to show $\sq_{n}\in \mathcal{C}^{4c,2(n+1)}_{2k+1}(\mathbb{R}^{c\times d\times d},\mathbb{R}^{c\times d\times d})$. We accomplish this by projecting $\bm{f}^n(X)$ onto the first 
	$c$ channels. Specifically, we define the kernel $K^{n}\in\mathbb{R}^{c\times 2c \times(2k+1)\times(2k+1)}$ and bias $\bm{b}^{n}\in\mathbb{R}^{c}$ as follows
	\begin{equation*}
	K^{n}:=
	\begin{pmatrix}
	S^{0,0}    &          &          &    \bm{0}_{(2k+1)\times(2k+1)}  &\cdots& \bm{0}_{(2k+1)\times(2k+1)}     \\
	& \ddots   &          &    \vdots &\vdots& \vdots\\
	&          &  S^{0,0} &    \bm{0}_{(2k+1)\times(2k+1)}  &\cdots& \bm{0}_{(2k+1)\times(2k+1)}              
	\end{pmatrix},
	~~\bm{b}^{n}:=\bm{0}_{c}.
	\end{equation*}
	Using this kernel and bias, we obtain for any $X\in[0,1]^{c\times d\times d}$,
	\begin{equation*}
	\sq_{n}(X)=\sigma\circ A_{K^{n},\bm{b}^{n}}\circ\bm{f}^{n}(X),
	\end{equation*}	
	from which it follows that $\sq_{n}\in \mathcal{C}^{4c,2(n+1)}_{2k+1}(\mathbb{R}^{c\times d\times d},\mathbb{R}^{c\times d\times d})$.
\end{proof}

Using the function $\sq_{n}:[0,1]\rightarrow[0,1]$, we construct a mapping $\prd_n:[0,1]^2\rightarrow\mathbb{R}$ to approximate product of numbers from $[0,1]$. Specifically,
for $n\in\mathbb{N}$, we define
\begin{equation*}
\prd_n(x,y):=2\bigg(\sq_{n}\Big(\frac{x+y}{2}\Big)-\sq_{n}\Big(\frac{x}{2}\Big)-\sq_{n}\Big(\frac{y}{2}\Big)\bigg),\quad x,y\in[0,1].
\end{equation*}
As demonstrated in He et al. \cite{he2022relu}, the function
$\prd_n(x,y)$ is exactly the piecewise linear interpolant of the function $xy$ over $[0,1]^2$ on a uniform mesh of size $2^{-(n+1)}$. From this perspective, the properties stated in Lemma \ref{lem:prodappr4scalar} follow naturally from standard results on piecewise linear interpolation, in a manner analogous to the earlier identification of $\sq_n(x)$ as a piecewise linear interpolant of $x^2$, and are closely connected to hierarchical bases in finite element spaces \cite{he2022relu}. For the sake of completeness, however, we provide a self-contained proof of the lemma.

\begin{lemma}\label{lem:prodappr4scalar}
	For any $n\in\mathbb{N}$, the mapping $\prd_n:[0,1]^2\rightarrow\mathbb{R}$ satisfies
	\begin{itemize}
		\item [(a)] for any $x,y\in[0,1]$, $\prd_n(x,y)\in [0,1]$;
		\item [(b)] if $x=0$ or $y=0$, then $\prd_n(x,y)=0$;
		\item [(c)] if $x=1$ (respectively, $y=1$), then $\prd_n(x,y)=y$ (respectively, $\prd_n(x,y)=x$);
		\item [(d)] for any $x,y\in[0,1]$, $|\prd_n(x,y)-xy|\leq 3\cdot2^{-2n-1}$.
	\end{itemize}
\end{lemma}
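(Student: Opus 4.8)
The plan is to exploit the polarization identity
\begin{equation*}
xy = 2\left[\left(\frac{x+y}{2}\right)^2 - \left(\frac{x}{2}\right)^2 - \left(\frac{y}{2}\right)^2\right],
\end{equation*}
which shows that $\prd_n$ is exactly the expression obtained by replacing each square $t\mapsto t^2$ with its piecewise-linear surrogate $\sq_n$. I would first record the properties of $\sq_n$ that drive all four parts: it is the piecewise-linear interpolant of $t\mapsto t^2$ at the dyadic nodes $j2^{-n}$, hence exact at those nodes; it is convex, being the piecewise-linear interpolant of a convex function; it satisfies $\sq_n(0)=0$ (since $g_m(0)=0$ for every $m$); and it obeys the two-sided bound $\sq_n(t)-t^2\in[0,4^{-(n+1)}]$ for $t\in[0,1]$. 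Note that for $x,y\in[0,1]$ the three arguments $\tfrac{x}{2},\tfrac{y}{2},\tfrac{x+y}{2}$ all lie in $[0,1]$, so these facts apply throughout.

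Part (b) is immediate: substituting $x=0$ and using $\sq_n(0)=0$ collapses the expression to $2\bigl(\sq_n(\tfrac{y}{2})-\sq_n(\tfrac{y}{2})\bigr)=0$, and the case $y=0$ follows by the symmetry of $\prd_n$. For part (d), subtracting the polarization identity from the definition of $\prd_n$ gives
\begin{equation*}
\prd_n(x,y)-xy = 2\left[e\!\left(\frac{x+y}{2}\right)-e\!\left(\frac{x}{2}\right)-e\!\left(\frac{y}{2}\right)\right], \qquad e(t):=\sq_n(t)-t^2,
\end{equation*}
and since each value $e(\cdot)\in[0,4^{-(n+1)}]$, the triangle inequality yields $|\prd_n(x,y)-xy|\le 6\cdot 4^{-(n+1)}=3\cdot 2^{-2n-1}$.

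For part (c), I use symmetry to reduce to $x=1$. The map $y\mapsto\prd_n(1,y)$ is piecewise linear, and its breakpoints occur precisely where $\tfrac{y}{2}$ or $\tfrac{1+y}{2}$ coincides with a dyadic node $j2^{-n}$ of $\sq_n$; both conditions reduce to $y=j2^{-(n-1)}$. At such a breakpoint all three arguments are dyadic nodes, so $\sq_n$ equals the exact square there, and plugging into the polarization identity gives $\prd_n(1,y)=y$. Since $y\mapsto\prd_n(1,y)$ and the identity map are both affine on each interval between consecutive breakpoints and agree at every breakpoint, they coincide on all of $[0,1]$; in particular $\prd_n(1,1)=1$.

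Finally, part (a) rests on convexity, writing $\phi=\sq_n$. The lower bound $\prd_n\ge0$ is the superadditivity of a convex function with $\phi(0)=0$: for $a,b\ge0$ with $a+b\le1$ one has $\phi(a)+\phi(b)\le\phi(a+b)$, applied with $a=\tfrac{x}{2}$ and $b=\tfrac{y}{2}$. The upper bound is the main obstacle, since the crude estimate $\prd_n\le xy+3\cdot2^{-2n-1}$ from (d) already exceeds $1$; I would instead argue monotonicity. Because $\phi$ is convex, the increment $\phi(b+h)-\phi(b)$ is non-decreasing in $b$ for fixed $h\ge0$; applying this to $\prd_n(x,y_2)-\prd_n(x,y_1)$ (where the two increments have equal length $\tfrac{y_2-y_1}{2}$ but the first starts a distance $\tfrac{x}{2}$ further right) shows $\prd_n(x,y)$ is non-decreasing in $y$, and by symmetry in $x$. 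Hence the maximum over $[0,1]^2$ is attained at $(1,1)$, where part (c) gives $\prd_n(1,1)=1$, so $\prd_n(x,y)\le1$ throughout.
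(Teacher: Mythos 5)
Your proof is correct, and for parts (a) and (c) it takes a genuinely different route from the paper's. Parts (b) and (d) coincide with the paper's argument (direct substitution, and the polarization identity plus the two-sided bound $\sq_n(t)-t^2\in[0,4^{-(n+1)}]$, respectively). For part (c) the paper instead computes $\prd_n(1,y)$ explicitly from the series defining $\sq_n$, using the identity $g_m(\tfrac{y+1}{2})=g_m(\tfrac{y}{2})$ for $m\geq 2$ so that only the $g_1$ terms survive; your argument --- that $y\mapsto\prd_n(1,y)$ is piecewise affine with breakpoints among the points $j2^{-(n-1)}$, agrees with the identity map at every such point because $\sq_n$ is exact at dyadic nodes, and hence agrees everywhere --- reaches the same conclusion without touching the internal structure of the $g_m$. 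The largest divergence is in part (a): the paper localizes $x$ and $y$ to dyadic cells $[i2^{-n},(i+1)2^{-n}]\times[j2^{-n},(j+1)2^{-n}]$, writes out $\sq_n$ as an explicit affine function on each relevant subinterval, splits into three cases according to the parities of $i$ and $j$, and bounds the resulting closed-form expression for $\prd_n$ directly. You replace all of this with two soft consequences of the convexity of $\sq_n$ (which does hold, being the piecewise-linear interpolant of a convex function at increasing nodes): superadditivity of a convex function vanishing at $0$ gives $\prd_n\geq 0$, and the monotonicity of convex increments gives that $\prd_n$ is non-decreasing in each variable, so the maximum is $\prd_n(1,1)=1$ by part (c). Your version is shorter and avoids the parity case analysis entirely, at the cost of making (a) depend on (c) (harmless, since (c) is independent of (a)) and of invoking the convexity of $\sq_n$, a property the paper never needs to state; the paper's computation, while heavier, yields the slightly sharper cell-wise formula $\prd_n(x,y)=2\cdot 2^{-n}\bigl(j\tfrac{x}{2}+i\tfrac{y}{2}-\tfrac{ij}{2}2^{-n}\bigr)$ as a by-product.
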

\begin{proof}
	To prove part (a), note that for given $x,y\in[0,1]$, there exist positive integers $i,j\in 1:(2^n-1)$, such that
	\begin{equation*}
	x\in[i2^{-n},(i+1)2^{-n}],~~~y\in[j2^{-n},(j+1)2^{-n}].
	\end{equation*}
	Consequently,
	\begin{equation*}
	\frac{x}{2}\in\bigg[\frac{i}{2^{n+1}},\frac{i+1}{2^{n+1}}\bigg],~~~\frac{y}{2}\in\bigg[\frac{j}{2^{n+1}},\frac{j+1}{2^{n+1}}\bigg],~~~\frac{x+y}{2}\in\bigg[\frac{i+j}{2^{n+1}},\frac{i+j+2}{2^{n+1}}\bigg].
	\end{equation*}
	Recall that $\sq_{n}(x)$ is the piecewise linear interpolation of $f(x)=x^2$ with $2^n+1$ uniformly distributed breakpoints $\frac{0}{2^n},\frac{1}{2^n},\ldots,\frac{2^n}{2^n}$:
	\begin{equation*}
	\sq_{n}\Big(\frac{l}{2^n}\Big)=\Big(\frac{l}{2^n}\Big)^2, \quad l\in 0:2^n.
	\end{equation*}
	It follows that
	\begin{eqnarray*}
		\sq_{n}(x)=2^{-n}((2i+1)x-i(i+1)2^{-n}).
	\end{eqnarray*}
	
	The remainder of the proof for part (a) is divided into three cases:
	(1) both $i$ and $j$ are even; (2) both $i$ and $j$ are odd; (3) one of $i$ and $j$ is even and the other is odd. For the sake of brevity, we will only prove the first case in detail, as the other two cases can be addressed using a similar approach.
	
	Suppose that both $i$ and $j$ are even. Then, we have
	\begin{equation*}
	\frac{x}{2}\in\bigg[\frac{i}{2^{n+1}},\frac{i+2}{2^{n+1}}\bigg],~~~\frac{y}{2}\in\bigg[\frac{j}{2^{n+1}},\frac{j+2}{2^{n+1}}\bigg],~~~\frac{x+y}{2}\in\bigg[\frac{i+j}{2^{n+1}},\frac{i+j+2}{2^{n+1}}\bigg],
	\end{equation*}
	and consequently,
	\begin{align*}
		\sq_{n}\Big(\frac{x}{2}\Big)&=2^{-n}\Big((i+1)\frac{x}{2}-\frac{i}{2}\Big(\frac{i}{2}+1\Big)2^{-n}\Big),\\
		\sq_{n}\Big(\frac{y}{2}\Big)&=2^{-n}\Big((j+1)\frac{y}{2}-\frac{j}{2}\Big(\frac{j}{2}+1\Big)2^{-n}\Big),\\
		\sq_{n}\Big(\frac{x+y}{2}\Big)&=2^{-n}\Big((i+j+1)\frac{x+y}{2}-\frac{i+j}{2}\Big(\frac{i+j}{2}+1\Big)2^{-n}\Big).
	\end{align*}
	Therefore,
	\begin{eqnarray*}
		\prd_n(x,y)=2\Big(\sq_{n}\Big(\frac{x+y}{2}\Big)-\sq_{n}\Big(\frac{x}{2}\Big)-\sq_{n}\Big(\frac{y}{2}\Big)\Big)=2\cdot2^{-n}\Big(j\cdot\frac{x}{2}+i\cdot\frac{y}{2}-\frac{ij}{2}\cdot2^{-n}\Big).
	\end{eqnarray*}
	Noting that $x\in[i2^{-n},(i+1)2^{-n}]$ and $y\in[j2^{-n},(j+1)2^{-n}]$, we obtain
	\begin{eqnarray*}
		&&\prd_n(x,y)\geq 2\cdot2^{-n}\Big(j\cdot\frac{i2^{-n}}{2}+i\cdot\frac{j2^{-n}}{2}-\frac{ij}{2}\cdot2^{-n}\Big)=0,\\
		&&\prd_n(x,y)\leq 2\cdot2^{-n}\Big(j\cdot\frac{(i+1)2^{-n}}{2}+i\cdot\frac{(j+1)2^{-n}}{2}-\frac{ij}{2}\cdot2^{-n}\Big)\leq 1-2^{-2n}\leq 1,
	\end{eqnarray*}
	i.e., $\prd_n(x,y)\in [0,1]$. This proves part (a).
	
	Part (b) follows directly from the definition of $\prd_n$.
	
	To prove part (c), we start by noting that for $m\geq 2$, $g_m$ satisfies $g_m(\frac{y+1}{2})=g_m(\frac{y}{2})$ for any $y\in[0,1]$.
	Hence, we can compute $\prd_n(1,y)$ as follows:
	\begin{align*}
		\prd_n(1,y)&=2\bigg(\sq_{n}\Big(\frac{1+y}{2}\Big)-\sq_{n}\Big(\frac{1}{2}\Big)-\sq_{n}\Big(\frac{y}{2}\Big)\bigg)\\
		&=2\bigg(\frac{1+y}{2}-\sum_{m=1}^n\frac{g_m(\frac{1+y}{2})}{4^m}-\frac{1}{4}-\frac{y}{2}+\sum_{m=1}^n\frac{g_m(\frac{y}{2})}{4^m}\bigg)\\
		&=2\bigg(\frac{1}{4}-\frac{g_1(\frac{1+y}{2})}{4}+\frac{g_1(\frac{y}{2})}{4}\bigg).
	\end{align*}
	By the definition of $g_1$, we find that for any $y\in[0,1]$,
	\begin{equation*}
	g_1\Big(\frac{y}{2}\Big)-g_1\Big(\frac{y+1}{2}\Big)=2y-1.
	\end{equation*}
	Substituting this into our expression, we obtain:
	\begin{eqnarray*}
		\prd_n(1,y)=y,
	\end{eqnarray*}
	from which part (c) follows.
	
	We prove part (d). It follows from the identity $xy=2\Big(\big(\frac{x+y}{2}\big)^2-\big(\frac{x}{2}\big)^2-\big(\frac{y}{2}\big)^2\Big)$ that for any $x,y\in[0,1]$,
	\begin{align*}
		\Big|\prd_n(x,y)-xy\Big|
		\leq&2 \Big|\sq_{n}\Big(\frac{x+y}{2}\Big)-\Big(\frac{x+y}{2}\Big)^2\Big|+2 \Big|\sq_{n}\Big(\frac{x}{2}\Big)-\Big(\frac{x}{2}\Big)^2\Big|\\
		&+2 \Big|\sq_{n}\Big(\frac{y}{2}\Big)-\Big(\frac{y}{2}\Big)^2\Big|\\
		\leq& 3\cdot 2^{-2n-1}, 	
	\end{align*}
	as claimed.
\end{proof}

When applied to tensors, the map $\prd_n$ is interpreted as performing component-wise operations. By employing $\prd_n$ in this manner, we can prove Proposition \ref{pro:app4prod4tensor} as follows.
\begin{proof}[Proof of Proposition \ref{pro:app4prod4tensor}]
	For simplicity, we will only consider the case $d=2^p$ for some $p\in\mathbb{N}$. Given any $X\in [0,1]^{d\times d}$, we define $X^{0}:=X$. According to Lemma \ref{lem:prodappr4scalar} (a), we can recursively construct a sequence of tensors $X^q\in [0,1]^{d\times 2^{p-q}}$ for $q\in 1:p$ as follows
	\begin{equation*}
	[X^q]_{:,j}:=\prd_n([X^{q-1}]_{:,2j-1},[X^{q-1}]_{:,2j}),\quad j\in 1:2^{p-q}.
	\end{equation*}
	It follows from Lemma \ref{lem:prodappr4scalar} (d) that
	\begin{equation*}
	\Big|[X^1]_{:,j}-[X]_{:,2j-1}\odot[X]_{:,2j}\Big|\leq 3\cdot 2^{-2n-1},\quad j\in 1:2^{p-1}.
	\end{equation*}
	where both the product $\odot$ and the inequality $\leq$ are understood component-wise. By induction, it is straightforward to derive the following inequality:
	\begin{equation}\label{eq:rowproerr}
	\Big|X^p-\odot_{j=1}^{2^p}[X]_{:,j}\Big|\leq 3\cdot 2^{-2n-1}(2^p-1).
	\end{equation}
	
	We first present the claim: There exists a mapping $\widetilde{\Pi}_n^{\text{c}} \in\mathcal{C}^{12,L^{\text{c}}}_{2k+1}(\mathbb{R}^{d\times d},\mathbb{R}^{d\times d})$ with $L^{\text{c}}=(2n+3)p+(d-1)$ such that
	\begin{equation*}
	[\widetilde{\Pi}_n^{\text{c}}(X)]_{:,2^p}=X^p.
	\end{equation*}
	Let $S^{s,t}\in\mathbb{R}^{(2k+1)\times(2k+1)}$, with $s,t\in -k:k$, be the basic blocks defined in Section \ref{sec:proof4thm1}. We consider the following kernels:
	\begin{align*}
	K^{0}:=\frac{1}{2}
	\begin{pmatrix}
	S^{0,-1} + S^{0,0} \\
	S^{0,-1} \\
	S^{0,0}                           
	\end{pmatrix},~~K^{1}:=
	\begin{pmatrix}
	S^{0,-1} \\
	S^{0,0}                         
	\end{pmatrix},~~K^{2}:=
	\begin{pmatrix}
	S^{0,-1} &  \\
	& S^{0,0}                      
	\end{pmatrix},\\	
	K^{3}:=\frac{1}{2}
	\begin{pmatrix}
	S^{0,-1} &   S^{0,0}     \\
	S^{0,-1} &   \bm{0}   \\
	\bm{0} &   S^{0,0}                    
	\end{pmatrix},~~
	K^{4}:=2
	\begin{pmatrix}
	S^{0,0} & -S^{0,0} & -S^{0,0}                           
	\end{pmatrix}.
	\end{align*}	
	We set $\Lambda_1:=\sigma\circ A_{K^{4}}\circ\sq_n\circ\sigma\circ A_{K^{0}}$. By Lemma \ref{lem:squappr4tensor}, $\Lambda_1 \in\mathcal{C}^{12,2(n+2)}_{2k+1}(\mathbb{R}^{d\times d},\mathbb{R}^{d\times d})$. Let $Y^{1}:=\Lambda_1(X)$. Direct computation shows that
	\begin{equation*}
	[Y^{1}]_{:,2j}=[X^1]_{j},\quad j\in 1:2^{p-1}.
	\end{equation*}
	For $q\in 2:p$, we put
	\begin{equation*}
	\Lambda_q:=\sigma\circ A_{K^{4}}\circ\sq_n\circ\sigma\circ A_{K^{3}}\circ\underbrace{(\sigma\circ A_{K^{2}})\circ\cdots\circ(\sigma\circ A_{K^{2}})}_{2^{q-1}-2}\circ \sigma\circ A_{K^{1}}.
	\end{equation*}   
	By applying Lemma \ref{lem:squappr4tensor} again, $\Lambda_q \in\mathcal{C}^{12,L_q^{\text{c}}}_{2k+1}(\mathbb{R}^{d\times d},\mathbb{R}^{d\times d})$ with $L_q^{\text{c}}=2(n+1)+2^{q-1}+1$. We define recursively the tensor sequence $Y^{1},Y^{2},\ldots,Y^{p}\in [0,1]^{d\times d}$ by
	\begin{equation*}
	Y^{q}:=\Lambda_q(Y^{q-1}),\quad q\in 2:p.
	\end{equation*} 
	With this construction, we obtain the following relationship:
	\begin{equation*}
	[Y^{q}]_{:,j2^q}=\prd_n([Y^{q-1}]_{:,(2j-1)2^{q-1}},[Y^{q-1}]_{:,(2j)2^{q-1}}), \quad j\in 1:2^{p-q}.
	\end{equation*}
	Additionally, from the equality $[Y^{1}]_{:,2j}=[X^1]_{:,j}$, we find that
	\begin{equation*}
	[Y^{2}]_{:,j2^2}=\prd_n([X^{1}]_{:,(2j-1)},[X^{1}]_{:,(2j)})=[X^2]_{:,j}, \quad j\in 1:2^{p-2}.
	\end{equation*}
	Repeating this process, we eventually arrive at
	\begin{equation*}
	[Y^{p}]_{:,2^p}=\prd_n([X^{p-1}]_{:,1},[X^{p-1}]_{:,2})=X^p.
	\end{equation*}
	Define $\widetilde{\Pi}_n^{\text{c}}:=\Lambda_p\circ\Lambda_{p-1}\circ\cdots\circ\Lambda_1$. Then, by what was shown, we have
	\begin{equation*}
	[\widetilde{\Pi}_n^{\text{c}}(X)]_{:,2^p}=X^p,
	\end{equation*}
	and consequently,
	\begin{equation*}
	\Big|\widetilde{\Pi}_n^{\text{c}}(X)]_{:,2^p}-\odot_{j=1}^{2^p}[X]_{:,j}\Big|\leq 3\cdot 2^{-2n-1}(2^p-1).
	\end{equation*}
	Furthermore, it can be verified that $\widetilde{\Pi}_n^{\text{c}} \in\mathcal{C}^{12,L^{\text{c}}}_{2k+1}(\mathbb{R}^{d\times d},\mathbb{R}^{d\times d})$ with $L^{\text{c}}=(2n+3)p+(d-1)$. This completes the proof of the claim.
	
	Next, starting with the tensor $Y^p\in[0,1]^{d\times d}$, we define $Z^0:=Y^p$ and construct recursively the sequence of tensors $Z^q\in [0,1]^{2^{p-q}\times d}$ for $q\in 1:p$ as follows:
	\begin{equation*}
	[Z^q]_{i,:}:=\prd_n([Z^{q-1}]_{2i-1,:},[Z^{q-1}]_{2i,:}),\quad\text{for}~i\in 1:2^{p-q}.
	\end{equation*}
	From the inequality (\ref{eq:rowproerr}), we obtain
	\begin{align*}
		\bigg|[Z^1]_{i,d}-\prod_{j=1}^{2^p}([X]_{2i-1,j}\cdot[X]_{2i,j})\bigg|\leq&\bigg|\prd_n([Z^{0}]_{2i-1,d},[Z^{0}]_{2i,d})-[Z^{0}]_{2i-1,d}\cdot[Z^{0}]_{2i,d}\bigg|\\
		&+\bigg|[Z^{0}]_{2i-1,d}\cdot[Z^{0}]_{2i,d}-\prod_{j=1}^{2^p}[X]_{2i-1,j}[Z^{0}]_{2i,d}\bigg|\\
		&+\bigg|\prod_{j=1}^{2^p}[X]_{2i-1,j}[Z^{0}]_{2i,d}-\prod_{j=1}^{2^p}[X]_{2i-1,j}\prod_{j=1}^{2^p}[X]_{2i,j}\bigg|\\
		\leq&3\cdot 2^{-2n-1}(2^{p+1}-1).
	\end{align*}
	By recursively applying the process, we can derive
	\begin{eqnarray*}
		\bigg|[Z^p]_{1,d}-\prod_{i,j=1}^d[X]_{i,j}\bigg|\leq 3\cdot 2^{-2n-1}(2^{2p}-1).
	\end{eqnarray*}
	
	Similarly, for $Z^q$, we assert that there exists a mapping $\widetilde{\Pi}_n^{\text{r}} \in\mathcal{C}^{12,L^{\text{r}}}_{2k+1}(\mathbb{R}^{d\times d},\mathbb{R}^{d\times d})$ with $L^{\text{r}}=(2n+3)p+(d-1)$ such that
	\begin{equation*}
	[\widetilde{\Pi}_n^{\text{r}}(Y^p)]_{2^p,:}=Z^p.
	\end{equation*}   
	The proof of this assertion closely follows the methodology used for the previous claim. The primary difference lies in a minor modification where we replace the basic block $S^{0,-1}$ with $S^{-1,0}$. This adjustment accounts for the different orientation of the tensor operations, transitioning from column-wise operations to row-wise operations. All other aspects of the proof, including the definition of the kernels, the application of the squaring operation, and the recursive construction of the tensor sequence, remain the same.
	
	Finally, let $\widetilde{\Pi}_n:=\widetilde{\Pi}_n^{\text{r}}\circ\widetilde{\Pi}_n^{\text{c}}$. Then, $\widetilde{\Pi}_n\in\mathcal{C}^{12,L}_{2k+1}(\mathbb{R}^{d\times d},\mathbb{R}^{d\times d})$ with $L:=L^{\text{c}}+L^{\text{r}}=2(2n+3)p+2(d-1)$. Moreover, we have 
	\begin{equation*}
	\bigg|[\widetilde{\Pi}_n(X)]_{d,d}-\prod_{i,j=1}^d[X]_{i,j}\bigg|=\bigg|[Z^p]_{1,d}-\prod_{i,j=1}^d[X]_{i,j}\bigg|\leq 3\cdot 2^{-2n-1}(2^{2p}-1),
	\end{equation*}
	which completes the proof of Proposition \ref{pro:app4prod4tensor}. 
\end{proof}

\section{Proof of Proposition \ref{pro:app4prod4basis}}\label{app:proof4prod4basis}
\begin{lemma}\label{lem:takcomp}
	Let $k,d\in\mathbb{N}$. For any $m,n\in 1:d$, define a mapping $\Delta_{m,n}:\mathbb{R}^{d\times d}\rightarrow\mathbb{R}^{d\times d}$ by
	\begin{equation*}
	[\Delta_{m,n}(X)]_{m^\prime,n^\prime}:=	
	\begin{cases}
	[X]_{m,n}, & \text{if}~m^\prime=m~\text{and}~n^\prime=n, \\
	0, & \text{otherwise}.
	\end{cases}
	\end{equation*}
	There exists a finite sequence of
	kernels $K^1,K^2,\ldots,K^r\in \mathbb{R}^{(2k+1)\times (2k+1)}$, dependent on $m,n$, with $r\leq \frac{5}{2}d-1$, such that for any $X\in [0,1]^{d\times d}$,
	\begin{equation*}
	\Delta_{m,n}(X)=K^r\ast \cdots \ast K^2\ast K^1\ast X,
	\end{equation*}
	where $\ast$ denotes the convolution operation defined in Subsection \ref{subsec:2drelucnn}.
\end{lemma}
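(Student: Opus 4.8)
The plan is to realize the linear map $\Delta_{m,n}$, which is the coordinate projection keeping only the $(m,n)$ entry, as a composition of single-channel zero-padding convolutions built from the basic shift blocks $S^{s,t}$ of Section \ref{sec:proof4thm1}. The crucial observation is how these blocks behave under composition. From the definitions of $\ast$ and the zero padding $\iota$ one checks that $[S^{s,t}\ast X]_{m',n'}=[\iota(X)]_{m'+s,n'+t}$, so $S^{s,t}$ translates the content of $X$ by $(-s,-t)$ while discarding whatever is pushed past the boundary. Composing $p$ identical blocks $S^{s,t}$ therefore translates by $p\cdot(-s,-t)$ and annihilates every entry whose source index leaves $1:d$ at some stage; in particular a run of $p$ identical shifts deletes precisely the $p$ columns (resp.\ rows, or, for a diagonal block, the two border strips) that are advanced past the corresponding edge. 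I would first record this truncation rule, which is the engine of the whole construction.

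With this in hand I would build $\Delta_{m,n}$ in three stages. First, translate the target entry $[X]_{m,n}$ to a corner of the grid using horizontal, vertical and diagonal shifts; since the target moves toward the corner it is never truncated, while all other entries are merely relocated (the translation being injective on indices, no other entry can land at that corner). Second, apply a single diagonal sweep of length $d-1$, e.g.\ $d-1$ copies of $S^{1,1}$ when the chosen corner is $(d,d)$: by the truncation rule the output at $(m',n')$ reads $[X]$ at $(m'+d-1,n'+d-1)$, which is in range only at $(1,1)$, so this sweep annihilates every entry except the one sitting at the corner and moves it to the opposite corner. Third, translate this single surviving value from the opposite corner to the position $(m,n)$, again by horizontal/vertical/diagonal shifts, which preserves it (it moves inward) and leaves the rest of the grid zero. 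The composite is exactly $\Delta_{m,n}$.

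It remains to count the kernels and meet the bound $r\le\frac52 d-1$. Writing $a=m-1$, $b=n-1$ and $D=d-1$, routing through the corner $(d,d)$ costs $\max(D-a,D-b)$ shifts to reach the corner, $D$ for the sweep, and $\max(a,b)$ to return, i.e.\ $A=2D+|a-b|$; routing through the anti-diagonal corner $(1,d)$ costs $B=\max(a,D-b)+D+\max(D-a,b)$. Choosing whichever routing is cheaper gives $r=\min(A,B)$, and a short case analysis shows $\max_{0\le a,b\le D}\min(A,B)=\frac52 D=\frac52(d-1)\le\frac52 d-1$, the extremum occurring near $a=0,\ b=D/2$. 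Thus the two stages that move the entry contribute at most $\frac32(d-1)$ together and the sweep contributes $d-1$.

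The main obstacle, deserving the most care, is the rigorous justification of the truncation rule under composition: one must verify that chaining shifts both annihilates all off-corner junk and preserves the single target value, and that the intermediate zero padding at each layer produces no spurious nonzero contributions. Establishing this cleanly, together with the elementary but slightly fiddly optimization over the two corner routings that pins down the constant $\frac52$, is where the real work lies; the individual shift computations are then routine.
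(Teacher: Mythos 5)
Your proposal is correct and follows essentially the same strategy as the paper's proof: shift the target entry to a corner of the grid, apply a full diagonal sweep of $d-1$ identical shift kernels to annihilate every other entry while transporting the survivor to the opposite corner, and then shift it back to position $(m,n)$, with the count $2(d-1)+|{(m-1)-(n-1)}|$ controlled by choosing the cheaper of the admissible corner routings. The only cosmetic difference is that you organize the corner selection as a minimum over two routings, whereas the paper splits into four quadrant cases for $(m,n)$; both yield the bound $r\le \tfrac52(d-1)\le\tfrac52 d-1$.
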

\begin{proof}	
	We split the proof into four cases based on different ranges of $m$ and $n$: (1) $m,n\in 1:\lceil \frac{d}{2} \rceil$; (2) $m\in 1:\lceil \frac{d}{2} \rceil~\text{and}~n\in (\lceil \frac{d}{2} \rceil+1):d$; (3) $m\in (\lceil \frac{d}{2} \rceil+1):d~\text{and}~n\in 1:\lceil \frac{d}{2} \rceil$; (4) $m,n\in (\lceil \frac{d}{2} \rceil+1):d$. We focus on proving the lemma for the first two cases only. The proofs for Cases 3 and 4 can be obtained from the results of Cases 2 and 1, respectively.
	
	We prove the lemma for Case 1: $m,n\in 1:\lceil \frac{d}{2} \rceil$. If $m\leq n$, then for any $X\in [0,1]^{d\times d}$, $\Delta_{m,n}(X)$ can be expressed as a series of convolutions using specific kernels:
	\begin{align*}
	\Delta_{m,n}(X)=&\underbrace{S^{1,0}\ast \cdots \ast S^{1,0}}_{n-m} \ast \underbrace{S^{1,1}\ast \cdots \ast S^{1,1}}_{d-n}\ast \underbrace{S^{-1,-1}\ast \cdots \ast S^{-1,-1}}_{d-1}\ast \underbrace{S^{0,1}\ast \cdots \ast S^{0,1}}_{n-m}\\
	&\ast \underbrace{S^{1,1}\ast \cdots \ast S^{1,1}}_{m-1} \ast X,
	\end{align*}
	where $S^{s,t}\in\mathbb{R}^{(2k+1)\times(2k+1)}$, with $s,t\in -k:k$, are the basic blocks defined in Section \ref{sec:proof4thm1}.
	If $n<m$, a similar expression holds
	\begin{align*}
	\Delta_{m,n}(X)=&\underbrace{S^{0,1}\ast \cdots \ast S^{0,1}}_{m-n} \ast \underbrace{S^{1,1}\ast \cdots \ast S^{1,1}}_{d-m}\ast \underbrace{S^{-1,-1}\ast \cdots \ast S^{-1,-1}}_{d-1}\ast \underbrace{S^{1,0}\ast \cdots \ast S^{1,0}}_{m-n}\\
	&\ast \underbrace{S^{1,1}\ast \cdots \ast S^{1,1}}_{n-1} \ast X.
	\end{align*}
	Hence, the desired result follows for Case 1.
	
	We prove the lemma for Case 2: $m\in 1:\lceil \frac{d}{2} \rceil~\text{and}~n\in (\lceil \frac{d}{2} \rceil+1):d$. If $m+n\leq d+1$, then for any $X\in [0,1]^{d\times d}$, $\Delta_{m,n}(X)$ can be expressed as
	\begin{align*}
	\Delta_{m,n}(X)=&\underbrace{S^{1,0}\ast \cdots \ast S^{1,0}}_{d+1-m-n} \ast \underbrace{S^{1,-1}\ast \cdots \ast S^{1,-1}}_{n-1}\ast \underbrace{S^{-1,1}\ast \cdots \ast S^{-1,1}}_{d-1}\\
	&\ast \underbrace{S^{0,-1}\ast \cdots \ast S^{0,-1}}_{d+1-m-n}\ast \underbrace{S^{1,-1}\ast \cdots \ast S^{1,-1}}_{m-1} \ast X.
	\end{align*}
	If $m+n>d+1$, a similar expression holds
	\begin{align*}
	\Delta_{m,n}(X)=&\underbrace{S^{0,-1}\ast \cdots \ast S^{0,-1}}_{m+n-d-1} \ast \underbrace{S^{1,-1}\ast \cdots \ast S^{1,-1}}_{d-m}\ast \underbrace{S^{-1,1}\ast \cdots \ast S^{-1,1}}_{d-1}\\
	&\ast \underbrace{S^{1,0}\ast \cdots \ast S^{1,0}}_{m+n-d-1}\ast \underbrace{S^{1,-1}\ast \cdots \ast S^{1,-1}}_{d-n} \ast X.
	\end{align*}
	Thus, the claimed result follows for Case 2.	
\end{proof}

Let $h_l$, $x_{l,i}$, $\phi_{l, i}$, and $I_{\bm{l}}$ be defined as in Subsection \ref{subsec:korov}, where $D$ is replaced by $d^2$. Using these notations, we present the following lemma.
\begin{lemma}\label{lem:genbas}
	Let $k,d\in\mathbb{N}$. For any $\bm{l}\in\mathbb{N}^{d^2}$ and $\bm{i}\in I_{\bm{l}}$, define a mapping $\Phi_{\bm{l},\bm{i}}:[0,1]^{d\times d}\rightarrow[0,1]^{d\times d}$ by
	\begin{equation*}
	[\Phi_{\bm{l},\bm{i}}(X)]_{m,n}=\phi_{l_j, i_j}\big([X]_{m,n}\big), \quad X\in[0,1]^{d\times d}, 
	\end{equation*}
	where $j=j(m,n):=(m-1)d+n$ for $m,n\in 1:d$. Then, we have 
	\begin{equation*}
	\Phi_{\bm{l},\bm{i}}\in \mathcal{C}^{W,L}_{2k+1}(\mathbb{R}^{d\times d},\mathbb{R}^{d\times d}),
	\end{equation*}
	with $W=2d^2$ and $L=\lfloor\frac{5}{2}d\rfloor+3$.
\end{lemma}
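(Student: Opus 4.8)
The plan is to realize $\Phi_{\bm{l},\bm{i}}$ as a position-wise operation in which the value at each cell $(m,n)$ is passed through its own hat function $\phi_{l_j,i_j}$ with $j=(m-1)d+n$. The fundamental difficulty is that a single convolutional layer acts identically at every spatial location, so one cannot apply genuinely different functions (different centers $i_jh_{l_j}$ and widths $h_{l_j}$) at different cells within a single channel. To circumvent this, I would route each cell into its own channel: first broadcast $X$ into $d^2$ identical copies with one convolutional layer built from $S^{0,0}$ blocks, and then, in the $j$-th channel, isolate the single entry $[X]_{m,n}$ using the shift decomposition of $\Delta_{m,n}$ from Lemma \ref{lem:takcomp}. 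Running all $d^2$ isolations simultaneously amounts to stacking these single-channel shift sequences block-diagonally (the concatenation of Lemma \ref{lem:con_net}), padding the shorter sequences to a common depth with identity convolutions $S^{0,0}$ (Lemma \ref{lem:dep_cnn}). Since $X\in[0,1]^{d\times d}$ and the shift-with-zero-padding operations preserve nonnegativity, every ReLU in this stage acts as the identity, so after at most $\lfloor\frac{5}{2}d\rfloor-1$ such layers the $j$-th channel holds $[X]_{m,n}$ at position $(m,n)$ and zeros elsewhere.

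With each cell isolated in its own channel, a per-channel bias now suffices to encode the cell-specific parameters. I would use the two-ReLU form of the hat function,
\begin{equation*}
\phi_{l,i}(x)=\sigma\Big(1-\tfrac{1}{h_l}\big(\sigma(x-ih_l)+\sigma(ih_l-x)\big)\Big),
\end{equation*}
which is exact on all of $\mathbb{R}$ and, crucially, needs only two auxiliary channels rather than three. Concretely, one layer maps the $j$-th channel to two channels computing $\sigma(\,\cdot\,-i_jh_{l_j})$ and $\sigma(i_jh_{l_j}-\,\cdot\,)$ (this is where the channel count reaches $2d^2$), and a second layer combines them with weights $-1/h_{l_j}$ and outer bias $1$ followed by ReLU. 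At position $(m,n)$ this outputs $\phi_{l_j,i_j}([X]_{m,n})$. At every other position the channel input is $0$, so the first auxiliary channel vanishes while the second equals $i_jh_{l_j}$; the second layer then returns $\sigma(1-i_j)=0$, using $i_j\geq1$ for $\bm{i}\in I_{\bm{l}}$. Thus each channel again carries a single nonzero value, still located at $(m,n)$.

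Finally, because channel $j$ is nonzero only at cell $(m,n)$ and the cells are in bijection with the channels, one aggregation layer summing all $d^2$ channels into the single output channel yields exactly $[\Phi_{\bm{l},\bm{i}}(X)]_{m,n}=\phi_{l_j,i_j}([X]_{m,n})$, and nonnegativity again makes the trailing ReLU harmless. Counting layers gives one broadcast layer, at most $\lfloor\frac{5}{2}d\rfloor-1$ isolation layers, two hat-function layers, and one aggregation layer, i.e. $L=\lfloor\frac{5}{2}d\rfloor+3$, with maximal channel count $W=2d^2$, as claimed. The main obstacle is the first conceptual step — recognizing that position-dependent activations are incompatible with convolutional weight-sharing and must be simulated by isolating cells into separate channels — together with the accompanying bookkeeping: verifying that the spurious contributions created by a uniform bias at the (many) zeroed positions cancel downstream (which is precisely what forces both the use of $i_j\geq1$ and the two-ReLU representation that fits within width $2d^2$), and confirming that the parallel isolation fits within the $\Theta(d)$ depth budget supplied by Lemma \ref{lem:takcomp}.
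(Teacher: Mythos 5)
Your construction is correct and uses the same ingredients as the paper's proof --- the cell-isolation maps $\Delta_{m,n}$ of Lemma \ref{lem:takcomp}, their block-diagonal concatenation via Lemma \ref{lem:con_net}, the two-ReLU representation $\phi_{l,i}(x)=\sigma\bigl(1-\tfrac{1}{h_l}(\sigma(x-x_{l,i})+\sigma(x_{l,i}-x))\bigr)$, and a final summing layer built from $S^{0,0}$ blocks --- and it arrives at the same width $2d^2$ and depth $\lfloor\frac{5}{2}d\rfloor+3$. The one genuine difference is the order of the middle stages: the paper applies the (spatially uniform) hat-function layers \emph{first}, so that channel $j$ holds $\phi_{l_j,i_j}([X]_{m',n'})$ at every position $(m',n')$, and only afterwards isolates the entry at $(m(j),n(j))$ with the bias-free shift kernels; you isolate first and apply the hat function afterwards. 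Your ordering forces the extra verification that the uniform bias does not create spurious nonzero values at the zeroed-out positions, which you correctly dispatch by noting that those positions feed $0$ into the hat layers and return $\sigma(1-i_j)=0$ because $i_j\geq 1$ for $\bm{i}\in I_{\bm{l}}$; the paper's ordering sidesteps this entirely, since its isolation stage uses no biases and acts on nonnegative hat-function outputs. Both orderings cost the same depth and width, so the difference is purely one of bookkeeping rather than of substance.
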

\begin{proof}	
	First, set
	\begin{equation*}
	K^1:=\begin{pmatrix}
	\frac{1}{h_{l_1}}S^{0,0}\\
	-\frac{1}{h_{l_1}}S^{0,0}\\
	\vdots\\
	\frac{1}{h_{l_{d^2}}}S^{0,0}\\
	-\frac{1}{h_{l_{d^2}}}S^{0,0}
	\end{pmatrix}\in\mathbb{R}^{2d^2\times 1\times(2k+1)\times(2k+1)},~~
	\bm{b}^1:=\begin{pmatrix}
	-\frac{x_{l_1,i_1}}{h_{l_1}}\\
	\frac{x_{l_1,i_1}}{h_{l_1}}\\
	\vdots\\
	-\frac{x_{l_{d^2},i_{d^2}}}{h_{l_{d^2}}}\\
	\frac{x_{l_{d^2},i_{d^2}}}{h_{l_{d^2}}}
	\end{pmatrix}\in\mathbb{R}^{2d^2}.
	\end{equation*}
	We obtain for each $j\in 1:d^2$, 		
	\begin{align*}
	\big[\sigma\circ A_{K^1,\bm{b}^1}(X)\big]_{2j-1,m(j),n(j)} 
	&= \sigma\bigg(\frac{[X]_{m(j),n(j)}-x_{l_j,i_j}}{h_{l_j}}\bigg), \\
	\big[\sigma\circ A_{K^1,\bm{b}^1}(X)\big]_{2j,m(j),n(j)} 
	&= \sigma\bigg(\frac{x_{l_j,i_j}-[X]_{m(j),n(j)}}{h_{l_j}}\bigg),
	\end{align*}			
	where $m(j)=\left\lfloor\frac{j-1}{d}\right\rfloor+1$ and $n(j)=j-d\left\lfloor\frac{j-1}{d}\right\rfloor$.
	Next, define the kernel $K^2\in\mathbb{R}^{d^2\times 2d^2 \times(2k+1)\times(2k+1)}$ and bias $\bm{b}^2\in\mathbb{R}^{d^2}$ as
	\begin{equation*}
	K^2:=
	\begin{pmatrix}
	-S^{0,0} & -S^{0,0} &          &        &          & \\
	& -S^{0,0} & -S^{0,0} &        &          & \\
	&          & \ddots   & \ddots &          & \\
	&          &          &        & -S^{0,0} & -S^{0,0}         
	\end{pmatrix}
	,~~\bm{b}^2:=\bm{1}_{d^2}.
	\end{equation*}
	Noting that 
	\begin{equation*}
	\sigma\bigg(1-\sigma\bigg(\frac{[X]_{m(j),n(j)}-x_{l_j,i_j}}{h_{l_j}}\bigg)-\sigma\bigg(\frac{x_{l_j,i_j}-[X]_{m(j),n(j)}}{h_{l_j}}\bigg)\bigg)=\phi_{l_j, i_j}\big([X]_{m(j),n(j)}\big),
	\end{equation*}        
	we have for each $j\in 1:d^2$,
	\begin{equation*}
	\big[\sigma\circ A_{K^2,\bm{b}^2}\circ\sigma\circ A_{K^1,\bm{b}^1}(X)\big]_{j,m(j),n(j)}=\phi_{l_j, i_j}\big([X]_{m(j),n(j)}\big).
	\end{equation*}		
	By Lemma \ref{lem:takcomp}, there exists a network $\Delta_{m(j),n(j)}\in\mathcal{C}^{1,\lfloor\frac{5}{2}d\rfloor}_{2k+1}(\mathbb{R}^{d\times d},\mathbb{R}^{d\times d})$ such that
	\begin{align*}
	&\begin{pNiceMatrix}[first-row,first-col]
	&   &    & \underset{\downarrow}{n(j)} &  &  \\
	& 0 & \cdots & 0 & \cdots & 0 \\
	& \vdots & \ddots & \vdots & \ddots & \vdots \\
	m(j){\rightarrow} & 0 & \cdots & \phi_{l_j, i_j}\big([X]_{m(j),n(j)}\big) & \cdots & 0 \\
	& \vdots & \ddots & \vdots & \ddots & \vdots \\
	& 0 & \cdots & 0 & \cdots & 0 \\
	\end{pNiceMatrix}\\
	&\hspace{15em}=\Delta_{m(j),n(j)}\big([\sigma\circ A_{K^2,\bm{b}^2}\circ\sigma\circ A_{K^1,\bm{b}^1}(X)]_{j,:,:}\big).
	\end{align*}	
	Let $\Delta$ represent the concatenation of the networks $\Delta_{m(j),n(j)}$ for $1\leq j\leq d^2$
	\begin{equation*}
	\Delta=\Delta_{m(1),n(1)}\oplus\Delta_{m(2),n(2)}\oplus\cdots\oplus\Delta_{m(d^2),n(d^2)}.
	\end{equation*}
	Then, by Lemma \ref{lem:con_net}, $\Delta\in\mathcal{C}^{d^2,\lfloor\frac{5}{2}d\rfloor}_{2k+1}(\mathbb{R}^{d^2\times d\times d},\mathbb{R}^{d^2\times d\times d})$ and we observe that
	\begin{equation*}
	\begin{pNiceMatrix}[first-row,first-col]
	&   &    & \underset{\downarrow}{n(j)} &  &  \\
	& 0 & \cdots & 0 & \cdots & 0 \\
	& \vdots & \ddots & \vdots & \ddots & \vdots \\
	m(j){\rightarrow} & 0 & \cdots & \phi_{l_j, i_j}\big([X]_{m(j),n(j)}\big) & \cdots & 0 \\
	& \vdots & \ddots & \vdots & \ddots & \vdots \\
	& 0 & \cdots & 0 & \cdots & 0 \\
	\end{pNiceMatrix}=\big[\Delta\circ\sigma\circ A_{K^2,\bm{b}^2}\circ\sigma\circ A_{K^1,\bm{b}^1}(X)\big]_{j,:,:}.
	\end{equation*}
	Finally, by taking 
	\begin{equation*}
	K^3=\begin{pmatrix}
	S^{0,0},S^{0,0},\ldots,S^{0,0}
	\end{pmatrix}
	\in\mathbb{R}^{1\times d^2\times(2k+1)\times(2k+1)}
	,~~b^3=0\in\mathbb{R},
	\end{equation*}
	we have for $m,n\in 1:d$,
	\begin{equation*}
	[\sigma\circ A_{K^3,b^3}\circ \Delta\circ\sigma\circ A_{K^2,\bm{b}^2}\circ\sigma\circ A_{K^1,\bm{b}^1}(X)]_{m,n}=\phi_{l_j, i_j}\big([X]_{m,n}\big),
	\end{equation*}
	where $j=j(m,n):=(m-1)d+n$. The desired conclusion then follows by letting $\Phi_{\bm{l},\bm{i}}=\sigma\circ A_{K^3,b^3}\circ\Delta\circ\sigma\circ A_{K^2,\bm{b}^2}\circ\sigma\circ A_{K^1,\bm{b}^1}$.
\end{proof}

With Lemmas \ref{lem:takcomp} and \ref{lem:genbas} established, we can proceed to prove Proposition \ref{pro:app4prod4basis}.
\begin{proof}[Proof of Proposition \ref{pro:app4prod4basis}]
	For any $n\in\mathbb{N}$, let $\Xi_n$ stand for the set $\{(\bm{l},\bm{i}):|\bm{l}|_1\leq n+d^2-1,\bm{i}\in I_{\bm{l}}\}$. Using the mapping $\widetilde{\Pi}_n$ from Proposition \ref{pro:app4prod4tensor} and the mapping $\Phi_{\bm{l},\bm{i}}$ from Lemma \ref{lem:genbas}, we define, for each $(\bm{l},\bm{i})\in \Xi_n$,
	\begin{equation*}
	\bm{g}_{\bm{l},\bm{i}}:=\widetilde{\Pi}_n\circ\Phi_{\bm{l},\bm{i}}.
	\end{equation*}
	It can be readily verified that  $\bm{g}_{\bm{l},\bm{i}}\in\mathcal{C}^{2d^2,L}_{2k+1}(\mathbb{R}^{d\times d},\mathbb{R}^{d\times d})$, where $L=2(2n+3)\lceil\log_2d\rceil+5d$. By Lemma \ref{lem:prodappr4scalar}, the support of $[\bm{g}_{\bm{l},\bm{i}}(X)]_{d,d}$ is contained within the support of $\phi_{\bm{l},\bm{i}}(\vect(X))$.  
	Moreover, by Proposition \ref{pro:app4prod4tensor}, for $X\in[0,1]^{d\times d}$,
	\begin{equation*}
	\Big|[\bm{g}_{\bm{l},\bm{i}}(X)]_{d,d}-\prod_{i,j=1}^d[\Phi_{\bm {l},\bm {i}}(X)]_{i,j}\Big|=\Big|[\bm{g}_{\bm{l},\bm{i}}(X)]_{d,d}-\phi_{\bm{l},\bm{i}}(\vect(X))\Big|\leq \frac{3}{2}\cdot 2^{-2n}(d^2-1).
	\end{equation*}
	This completes the proof of the proposition.
\end{proof}

\section{Technical Lemmas}\label{app:teclem}
\begin{lemma}\label{lem:boucomb}
	Let $d,n\in\mathbb{N}$. The generating functions for the sequences of combinatorial numbers $\big\{\binom{l+n+d^2-1}{d^2-1}\big\}_{l=0}^{\infty}$ and
	$\big\{\binom{l+d^2-1}{d^2-1}\big\}_{l=0}^{n-1}$ are given by
	\begin{align*}
		\sum_{l=0}^{\infty}\binom{l+n+d^2-1}{d^2-1}x^l&=\frac{1}{1-x}\cdot\sum_{l=0}^{d^2-1}\binom{n+d^2-1}{l}\bigg(\frac{x}{1-x}\bigg)^{d^2-1-l},\\ \sum_{l=0}^{n-1}\binom{l+d^2-1}{d^2-1}x^l&=\bigg(\frac{1}{1-x}\bigg)^{d^2}\cdot\sum_{l=0}^{d^2-1}\bigg[\binom{d^2-1}{l}-\binom{n+d^2-1}{l}x^n\bigg]x^{d^2-1-l}.
	\end{align*}
\end{lemma}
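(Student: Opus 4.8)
The plan is to treat both identities as generating-function computations over $|x|<1$, anchored on two elementary facts: the negative-binomial series $\sum_{l=0}^{\infty}\binom{l+d^2-1}{d^2-1}x^l=(1-x)^{-d^2}$, and the shifted-column identity $\sum_{l=0}^{\infty}\binom{l}{k}x^l=x^k(1-x)^{-(k+1)}$ (proved by the substitution $l=k+j$ and reindexing). To lighten notation I would write $D:=d^2$ throughout.

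For the first identity I would expand the shifted binomial coefficient by Vandermonde's convolution, taking $a=l$, $b=n+D-1$, $r=D-1$, so that
\[
\binom{l+n+D-1}{D-1}=\sum_{k=0}^{D-1}\binom{l}{k}\binom{n+D-1}{D-1-k}.
\]
Substituting this into $\sum_{l\ge 0}(\cdots)x^l$ and interchanging the two (absolutely convergent) sums makes the $l$-sum collapse via the shifted-column identity, leaving $\sum_{k=0}^{D-1}\binom{n+D-1}{D-1-k}\,x^k(1-x)^{-(k+1)}$. Pulling out a factor $1/(1-x)$ and re-indexing by $l=D-1-k$ turns this into precisely the claimed right-hand side $\frac{1}{1-x}\sum_{l=0}^{D-1}\binom{n+D-1}{l}\big(\tfrac{x}{1-x}\big)^{D-1-l}$. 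This part I expect to be routine and clean.

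For the second identity I would write the finite sum as the full series minus its tail,
\[
\sum_{l=0}^{n-1}\binom{l+D-1}{D-1}x^l=(1-x)^{-D}-\sum_{l=n}^{\infty}\binom{l+D-1}{D-1}x^l,
\]
and then shift the tail index $l\mapsto l+n$ to recognize it as $x^n$ times the left-hand side of the first identity; substituting the closed form already obtained and clearing to the common denominator $(1-x)^{D}$ is meant to assemble the stated bracketed numerator, with the $\binom{D-1}{l}$ terms coming from the $(1-x)^{-D}$ piece and the $\binom{n+D-1}{l}x^n$ terms from the tail. I expect this final step to be the only real obstacle: it is pure bookkeeping in matching the two families of binomial coefficients once denominators are cleared, and I would verify it by reading off the coefficient of each power $x^{D-1-l}$ in the numerator, sanity-checking the small cases $D=1$ and $D=2$ (and the specialization $x=2^{-2}$ used later in the main proof) before committing to the general reindexing.
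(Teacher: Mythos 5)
Your derivation of the first identity (Vandermonde's convolution combined with the shifted-column series $\sum_{l\ge 0}\binom{l}{k}x^l=x^k(1-x)^{-(k+1)}$) is correct and complete; the paper itself offers no argument beyond ``easily verified,'' so there is nothing to compare against on that front. The genuine problem sits in the last step of your second identity, exactly the step you flagged as ``pure bookkeeping'': it cannot succeed, because the second identity is false as stated. Carrying out your own plan, the tail is $x^n\cdot\frac{1}{1-x}\sum_{l=0}^{d^2-1}\binom{n+d^2-1}{l}\big(\frac{x}{1-x}\big)^{d^2-1-l}$, and each summand equals $\frac{x^{d^2-1-l}(1-x)^{l}}{(1-x)^{d^2}}$, not $\frac{x^{d^2-1-l}}{(1-x)^{d^2}}$. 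Clearing denominators therefore yields
\begin{equation*}
\sum_{l=0}^{n-1}\binom{l+d^2-1}{d^2-1}x^l=\Big(\frac{1}{1-x}\Big)^{d^2}\sum_{l=0}^{d^2-1}\bigg[\binom{d^2-1}{l}-\binom{n+d^2-1}{l}x^n\bigg]x^{d^2-1-l}(1-x)^{l},
\end{equation*}
with an extra factor $(1-x)^{l}$ in every term (equivalently $\frac{1}{1-x}\sum_{l}[\cdots]\big(\frac{x}{1-x}\big)^{d^2-1-l}$, mirroring the shape of the first identity). This is consistent: the $\binom{d^2-1}{l}$ part of the corrected numerator sums to $\big(x+(1-x)\big)^{d^2-1}=1$, matching the full series $(1-x)^{-d^2}$, whereas the stated numerator would sum to $(1+x)^{d^2-1}$. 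The sanity check you proposed would have exposed this immediately: for $d=2$, $n=1$ the left-hand side is $\binom{3}{3}=1$, while the stated right-hand side is $(1-x-3x^2-3x^3-x^4)/(1-x)^4\neq 1$. So you should prove the corrected identity and record the misprint rather than try to force the stated one. For completeness, the downstream use in the proof of Theorem 1 survives: once the negative $\binom{n+d^2-1}{l}x^n$ terms are discarded, the paper's erroneous right-hand side equals $(1+x)^{d^2-1}(1-x)^{-d^2}$, which still upper-bounds the partial sum for $0<x<1$, so only the lemma's statement, not the theorem, needs repair.
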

\begin{proof}
	These two equalities can be easily verified.
\end{proof}

\begin{lemma}\label{lem:boundcompl}
	Let $\beta>2$ and $\eta\leq1/3$. Then, for all
	\begin{equation*}
	x\geq(6\beta\log_2\beta)^{\beta}\frac{\big(\log_2\frac{1}{\eta}\big)^{\beta}}{\eta},
	\end{equation*}
    the following inequality holds:
	\begin{equation*}
		\frac{(\log_2x)^{\beta}}{x}\leq \eta.
	\end{equation*}
\end{lemma}
\begin{proof}
	Denote $R=(6\beta\log_2\beta)^{\beta}$ and $T=R\eta^{-1}(\log_2\eta^{-1})^{\beta}$. Given that $\beta>2$ and $\eta\leq1/3$, it follows that
	\begin{equation*}
	T\geq (6\beta\log_23)^{\beta}\geq e^{\beta}.
	\end{equation*}
	Since the function $w(x)=x^{-1}(\log_2x)^{\beta}$ is decreasing on the interval $[e^{\beta},+\infty)$, we have
	\begin{equation*}
	\frac{(\log_2x)^{\beta}}{x}\leq\frac{(\log_2T)^{\beta}}{T}, \qquad x\geq T.
	\end{equation*}
	Thus, the desired inequality follows from
	\begin{align*}
	\frac{(\log_2T)^{\beta}}{T}
	&=\eta\Bigg(\frac{\log_2R+\log_2\frac{1}{\eta}+\beta\log_2(\log_2\frac{1}{\eta})}{R^{\frac{1}{\beta}}\log_2\frac{1}{\eta}}\Bigg)^{\beta}\\
	&\leq\eta\Bigg(\frac{\log_2R}{R^{\frac{1}{\beta}}}+\frac{1}{R^{\frac{1}{\beta}}}+\frac{\beta}{R^{\frac{1}{\beta}}}\cdot\sup_{x>1}\frac{\log_2x}{x}\Bigg)^{\beta}\\
	&\leq\eta\Bigg(\frac{2}{3}+\frac{1}{18}+\frac{1}{6e}\Bigg)^{\beta}\\
	&\leq \eta.
	\end{align*}  
    The proof is completed.
\end{proof}

\end{appendices}

\end{document}